  \providecommand\BibTeX{{%
    \normalfont B\kern-0.5em{\scshape i\kern-0.25em b}\kern-0.8em\TeX}}}
\begin{document}

%%
%% The "title" command has an optional parameter,
%% allowing the author to define a "short title" to be used in page headers.
\title{HyperSNN: A new efficient and robust deep learning model for resource constrained control applications}

%%
%% The "author" command and its associated commands are used to define
%% the authors and their affiliations.
%% Of note is the shared affiliation of the first two authors, and the
%% "authornote" and "authornotemark" commands
%% used to denote shared contribution to the research.

\author{Zhanglu Yan}
\affiliation{%
  \institution{National University of Singapore}
  \streetaddress{P.O. Box 1212}
  \city{Singapore}
  \country{Singapore}
  \postcode{43017-6221}
}

\author{Shida Wang}
\affiliation{%
  \institution{National University of Singapore}
  \streetaddress{1 Th{\o}rv{\"a}ld Circle}
  \city{Singapore}
  \country{Singapore}}

\author{Kaiwen Tang}
\affiliation{%
  \institution{National University of Singapore}
  \streetaddress{1 Th{\o}rv{\"a}ld Circle}
  \city{Singapore}
  \country{Singapore}}

\author{Weng-Fai Wong}
\affiliation{%
  \institution{National University of Singapore}
  \streetaddress{1 Th{\o}rv{\"a}ld Circle}
  \city{Singapore}
  \country{Singapore}}

%%
%% By default, the full list of authors will be used in the page
%% headers. Often, this list is too long, and will overlap
%% other information printed in the page headers. This command allows
%% the author to define a more concise list
%% of authors' names for this purpose.
\renewcommand{\shortauthors}{Yan, et al.}

%%
%% The abstract is a short summary of the work to be presented in the
%% article.
\begin{abstract}

In light of the increasing adoption of edge computing in areas such as intelligent furniture, robotics, and smart homes, this paper introduces {\em HyperSNN}, an innovative method for control tasks that uses spiking neural networks (SNNs) in combination with hyperdimensional computing. HyperSNN substitutes expensive 32-bit floating point multiplications with 8-bit integer additions, resulting in reduced energy consumption while enhancing robustness and potentially improving accuracy. Our model was tested on AI Gym benchmarks, including Cartpole, Acrobot, MountainCar, and Lunar Lander. HyperSNN achieves control accuracies that are on par with conventional machine learning methods but with only 1.36\% to 9.96\% of the energy expenditure. Furthermore, our experiments showed increased robustness when using HyperSNN. We believe that HyperSNN is especially suitable for interactive, mobile, and wearable devices, promoting energy-efficient and robust system design. Furthermore, it paves the way for the practical implementation of complex algorithms like model predictive control (MPC) in real-world industrial scenarios.

\end{abstract}

%%
%% The code below is generated by the tool at http://dl.acm.org/ccs.cfm.
%% Please copy and paste the code instead of the example below.
%%
\begin{CCSXML}
<ccs2012>
 <concept>
  <concept_id>10010520.10010553.10010562</concept_id>
  <concept_desc>Computer systems organization~Embedded systems</concept_desc>
  <concept_significance>500</concept_significance>
 </concept>
 <concept>
  <concept_id>10010520.10010575.10010755</concept_id>
  <concept_desc>Computer systems organization~Redundancy</concept_desc>
  <concept_significance>300</concept_significance>
 </concept>
 <concept>
  <concept_id>10010520.10010553.10010554</concept_id>
  <concept_desc>Computer systems organization~Robotics</concept_desc>
  <concept_significance>100</concept_significance>
 </concept>
 <concept>
  <concept_id>10003033.10003083.10003095</concept_id>
  <concept_desc>Networks~Network reliability</concept_desc>
  <concept_significance>100</concept_significance>
 </concept>
</ccs2012>
\end{CCSXML}

\ccsdesc[500]{Computer systems organization~Embedded systems}
\ccsdesc[300]{Computer systems organization~Redundancy}
\ccsdesc{Computer systems organization~Robotics}
\ccsdesc[100]{Networks~Network reliability}

%%
%% Keywords. The author(s) should pick words that accurately describe
%% the work being presented. Separate the keywords with commas.
\keywords{Energy efficient, Robustness, Spiking neural network, Hyperdimensional Computing}

\received{20 February 2007}
\received[revised]{12 March 2009}
\received[accepted]{5 June 2009}

%%
%% This command processes the author and affiliation and title
%% information and builds the first part of the formatted document.
\maketitle

\section{Introduction}

In environments with limited computational capabilities, like intelligent furniture, robotics, smart homes, or wearables, it's common to offload raw data to cloud infrastructure, including fog devices or remote servers, for processing~\cite{rashid2020hear,soliman2013smart,article_li,majumder2017smart}. This offloading, however, can be energy-intensive, impacting device battery life~\cite{rashid2022template,mamaghanian2011compressed} and introducing latency, potentially affecting real-time processing~\cite{aburukba2020scheduling}. To mitigate these challenges, edge computing, where data is processed directly on the originating device with only results sent to the cloud, has become more prominent~\cite{chen2021recent,shi2016edge,li2018learning}. While this minimizes energy and latency concerns, it highlights the importance of energy-efficient computing for edge applications~\cite{jiang2020energy}. For tasks such as automated control in robotics and intelligent furniture, the constrained computational resources necessitate highly efficient methods~\cite{hu2019irobot,rashid2022template}. Furthermore, certain devices might have sensors with limited sensitivity due to hardware constraints~\cite{mehta2012mobile}. Given the diverse noise in real-world scenarios, ensuring model robustness is essential.

For control challenges, most current solutions focus on performance, frequently using intricate machine learning or deep learning models like MLP, VGG, ResNet, and MobileNet~\cite{de2018end,lockwood2020reinforcement,wang2020novel}. Yet, these often neglect energy efficiency and robustness. In contrast, {\em spiking neural networks} (SNNs) use binary inputs and outputs, substituting power-hungry multiplications with simpler additions, which are more energy efficient and silicon-resource friendly~\cite{ghosh2009spiking}. The inherent quantization in SNNs bolsters their noise resilience, making them apt for data from low-sensitivity sensors~\cite{zenke2021remarkable,sharmin2019comprehensive}. However, the energy-consuming softmax function in their final classification layer can hinder efficiency. Our research recommends pairing SNNs with {\em hyperdimensional computing} (HDC)~\cite{ge2020classification}, which uses xor operations on binary hypervectors for power savings. This approach further diminishes energy use and bolsters robustness, providing an optimal solution for constrained environments needing efficient and sturdy computation.

\begin{figure}[ht]
    \centering
    \includegraphics[width = 0.75\linewidth]{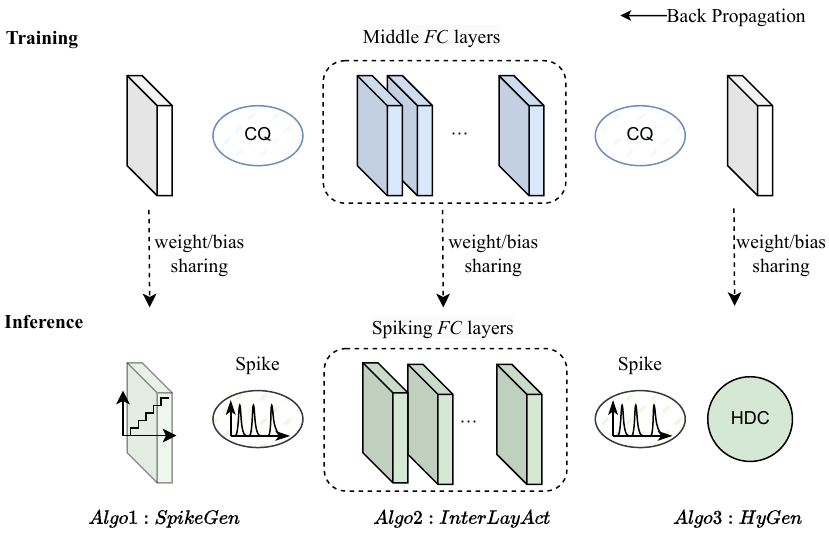}
        \caption{Summary of training and conversion workflow.}
    \label{fig:workflow_}
\end{figure}

Unfortunately, state-of-the-art SNNs often require high latencies for highly accurate detection, therefore increasing energy consumption and possibly nullifying their energy benefits over ANNs~\cite{10152465,deng2021optimal,yan2021near}. In this research, through a refined spike model, we reduce the inference latency to just 1 to 4 timesteps, significantly reducing energy consumption. On the other hand, state-of-the-art HDC utilizes large vectors (5000 to 10,000 dimensions) to boost accuracy due to enhanced orthogonality. This also leads to higher energy consumption~\cite{neubert2019introduction, ge2020classification}. We shall show that by integrating HDC with the binary output vector of SNNs and subsequent
fine-tuning, we can trim vector dimensions significantly, from thousands to mere tens or hundreds, while keeping the number of timesteps of the SNN small. Consequently, we propose a novel model we call {\em HyperSNN} that blends low-latency SNNs with reduced dimension HDC, resulting in a method for control that is energy efficienct and yet robust and accurate.

The workflow of HyperSNN is shown in Figure~\ref{fig:workflow_}. HyperSNN integrates {\em clamp and quantization} functions during training to minimize the ANN-to-SNN conversion's information loss. Within the embedding layer, the output is transformed into a spike train during inference using the {\em integrate-and-fire} model. The spike train's generation threshold is adjusted to coincide with the quantization level of input data, weights, and biases, amplifying neural activity~\cite{yan2022low}. For the intermediate layer, we employ the clamp and quantization function (CQ) during MLP training. At inference, trained weights and biases are used for SNN conversion. Our model also introduces a classification layer, leveraging HDC in place of traditional layers to perform similarity checks. In the Cartpole setup, our quantized models show a 9.5\% mean noise resilience boost and significant reduction in energy consumption. Specifically, our setup — with an 8-bit input, SNN for primary and intermediate layers, and HDC for output — achieves baseline performance using just 23.10243 pJ of energy, or 9.96\% of the MLP baseline's energy. This trend is consistent in various AI gym environments\footnote{https://www.gymlibrary.dev/index.html}, including Acrobot, MountainCar, and Lunar Lander. Notably, in the Lunar Lander setting, our "8bits input+8bits SNN (T = 4)" configuration attains energy efficiency at 0.6897 J (just 3.8\% of the original 18.1148J MLP model), exhibiting resilience in different noise conditions, unlike the MLP baseline. This efficiency enhancement encourages the adoption of advanced control algorithms like MPC, setting the stage for more sophisticated techniques.

Our proposed control systems prioritize a harmonious balance between control accuracy, power efficiency, and robustness, particularly for applications in intelligent furniture and robotics. Such balance not only elevates user experience but also fosters wider adoption and sustainability of these innovations. Fundamentally, our research offers some solutions to control challenges prevalent in interactive, mobile, wearable, and ubiquitous tech domains. The key contributions of this paper are:

\begin{itemize}
    \item We introduce hyper SNN, a unique spiking neural network (SNN) model combined with hyperdimensional computing (HDC), a novel integration in the control domain to our understanding.\footnote{https://anonymous.4open.science/r/SnnControl-926E/README.md} Our methodology, by confining HDC's dimensionality to tens or hundreds and utilizing an SNN timestep between 1 to 4, achieves benchmark-equivalent accuracy. Remarkably, it does so while slashing energy demands to just 1.36\% to 9.96\% of traditional methods, coupled with enhanced robustness.
    
    \item We undertook a comprehensive examination of the synergy between SNN and HDC in driving power efficiency and robustness. Our study spanned four canonical AI gym environments and assessed the effects of four specific noise types, each relevant to distinct applications.

    \item We examine the feasibility of applying our model to complex control algorithms, notably model predictive control (MPC), observing a marked increase in robustness. We also delve into its utility for classification tasks relevant to wearable technology applications, encompassing human activity recognition, speech analysis, and image classification.

\end{itemize}

\section{Methods}

We introduce HyperSNN for control challenges, prioritizing increased robustness, reduced latency, and maintained or improved accuracy. The detailed workflow of our model is depicted in Figure~\ref{fig:workflow_}. As evident from the figure, the embedding layer is first trained using clamp and quantization functions~\cite{10152465}, subsequently converting its output into a spike train via the IF model. Intermediate layers are initially trained in an MLP framework, later transitioning to an SNN during inference. This transformation replaces energy-intensive 32-bit floating point (FP32) multiplications with efficient 8-bit integer (INT8) addition operations. In the concluding classification layer, traditional fully connected layers are replaced by hyper-dimensional computing, using similarity checks to efficiently compute distances and generate results.

\subsection{Quantized activation}

%In this section, we explore the theoretical benefits of quantized activation.
In this section, we discuss the benefits of quantized activation from a theoretical perspective.

Considering the basic linear activation, denoted as $\sigma(z)=z$, it's evident that $\frac{Var (\sigma(x))}{Var (x)} = 1$, suggesting that linear activation processes input noise without inherent robustness. For our discussion, we assume \( x \) follows a Gaussian distribution, with noise defined by a zero mean and finite variance. This mirrors the behavior of truncation errors, where an input affected by such an error sees its truncated value consistently spread within the range $[-\delta, \delta]$.

%Similarly, since the Gaussian input is symmetric with respect to 0, it can be seen that Relu's variance ratio $$\frac{Var (\sigma_{Relu}(x))}{Var (x)} = \frac{1}{2}.$$
In the context of Relu activation, given the symmetric property of the Gaussian input around 0, its variance ratio is deduced as $$\frac{Var (\sigma_{\textrm{Relu}}(x))}{Var (x)} = \frac{1}{2}.$$

%Now we show that the activation function (spike rate)) of SNN which is $[0,1/T, 2/T,...,(T-1)/T, 1]$ is robust against input perturbations with variance ratio: $$\frac{Var (\sigma_{qRelu}(x))}{Var (x)} < \frac{Var( \sigma_{Relu}(x))}{Var (x)}.$$

Moving to the activation function (spike rate) of SNN, defined over the range $[0,1/T, 2/T,...,(T-1)/T, 1]$, we illustrate its resilience to input disturbances through its variance ratio: $$\frac{Var (\sigma_{\textrm{SNN}}(x))}{Var (x)} < \frac{Var( \sigma_{\textrm{Relu}}(x))}{Var (x)}.$$

The proof idea is demonstrated in Figure~\ref{Energy1}, ~\ref{Energy2} and ~\ref{Energy3}. Here $V$ denotes the variance of the output after the activation layer. 

\begin{figure*}[ht] 

\begin{minipage}[t]{0.33\linewidth} 
\centering
\includegraphics[width = 0.9\linewidth]{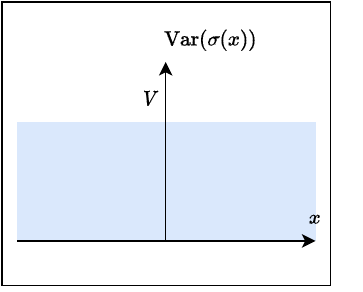}
\caption{Linear activation function} 
\label{Energy1} 
\end{minipage}
\begin{minipage}[t]{0.33\linewidth}
\centering
\includegraphics[width = 0.9\linewidth]{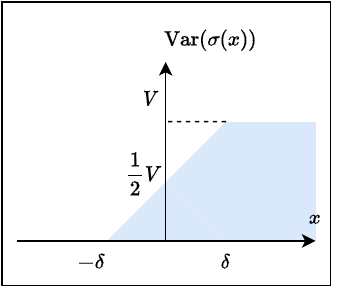}
\caption{Relu activation function}
\label{Energy2}
\end{minipage}
\begin{minipage}[t]{0.33\linewidth} 
\centering
\includegraphics[width = 0.9\linewidth]{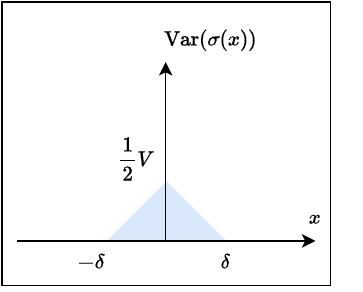}
\caption{SNN activation (spike rate) function} 
\label{Energy3} 
\end{minipage}

\end{figure*}

\subsection{Robustness for quantization weights and SNN activations }
% Maybe just talk about robustness here as low-latency is self-evident.

% TODO We might need to unify the notation for different arguments. 

Given a parameterized continuous function $f_{\theta}$, without loss of generality, assume it is $L$-Lipschitz continuous:
\begin{equation}
    \|f_{\theta_1}(x) - f_{\theta_2}(x)\| \leq L \|\theta_1 - \theta_2\|_2.
\end{equation}
%In this paper, the targets function $f$ are assumed to be stable against perturbations.  It is known that targets with stability can be approximated and optimized via parameterized models.~\cite{wang2023inverse}

In this paper, we assume the target function $f$ to be robust against perturbations. Stable targets can be approximated and optimized using parameterized models, as demonstrated by Wang et al.~\cite{wang2023inverse}.
% https://arxiv.org/abs/2305.19190

As is shown in Figure 1, denote the quantization level for the model by $\Delta \theta$, the stability of a model against weight perturbation is equivalent to
\begin{equation}
    |f_{\theta}(x) - f_{Q(\theta)} (x)| \leq w \left ( \| \theta - Q(\theta) \| \right ), \quad \forall x, \theta. 
\end{equation}
%Here $Q(\theta)$ refers to the quantized model weights which satisfies $\|Q(\theta) - \theta \|_\infty \leq \Delta \theta$, and function $w(\cdot)$ is the modulus of continuity of model $f$ with respect to the weights. 
Here, \( Q(\theta) \) denotes the quantized model weights, ensuring \( \|Q(\theta) - \theta \|_\infty \leq \Delta \theta \). The function \( w(\cdot) \) represents the modulus of continuity for model \( f \) concerning its weights.

\begin{proposition}[Robustness against weight perturbation]
    Assume the optimal target function is $L$-Lipschitz continuous. Let the model weights be $D$-dimensional, 
    \begin{equation}
        |f_{\theta}(x) - f_{Q(\theta)} (x)| \leq D L \|\Delta \theta\|_\infty. 
    \end{equation}
    In other words, $w(\Delta \theta) \leq D L \|\Delta \theta\|_\infty $.
\end{proposition}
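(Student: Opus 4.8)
The plan is to reduce the claim to the assumed $L$-Lipschitz property of $f_\theta$ in its weights and then convert the resulting $\ell_2$ bound into an $\ell_\infty$ bound over the $D$ coordinates. First I would instantiate the Lipschitz inequality with $\theta_1 = \theta$ and $\theta_2 = Q(\theta)$, which immediately gives
\begin{equation}
    |f_\theta(x) - f_{Q(\theta)}(x)| \le L \, \|\theta - Q(\theta)\|_2 .
\end{equation}
This already holds for every $x$ and $\theta$, matching the quantifiers in the claim, so the only remaining task is to control the right-hand side by $D L \|\Delta\theta\|_\infty$.

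The natural next step is a coordinate-by-coordinate (telescoping) argument, which is what produces the factor $D$ rather than $\sqrt{D}$. I would introduce intermediate weight vectors $\theta^{(0)} = \theta, \theta^{(1)}, \dots, \theta^{(D)} = Q(\theta)$, where $\theta^{(k)}$ agrees with $Q(\theta)$ in its first $k$ coordinates and with $\theta$ in the remaining ones, so that consecutive vectors differ in a single coordinate. The triangle inequality then gives
\begin{equation}
    |f_\theta(x) - f_{Q(\theta)}(x)| \le \sum_{k=1}^{D} |f_{\theta^{(k-1)}}(x) - f_{\theta^{(k)}}(x)| ,
\end{equation}
and for each summand the $\ell_2$ distance $\|\theta^{(k-1)} - \theta^{(k)}\|_2$ equals the absolute value of a single coordinate of $\theta - Q(\theta)$, which is at most $\|\Delta\theta\|_\infty$ by the quantization guarantee $\|Q(\theta) - \theta\|_\infty \le \Delta\theta$. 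Applying the Lipschitz bound to each of the $D$ terms and summing yields exactly $D L \|\Delta\theta\|_\infty$, and hence $w(\Delta\theta) \le D L \|\Delta\theta\|_\infty$ by definition of the modulus of continuity.

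I expect the only real subtlety to lie in the choice between this telescoping route and the one-line alternative that applies the standard norm inequality $\|v\|_2 \le \sqrt{D}\,\|v\|_\infty$ directly to the first display. The latter gives the sharper constant $\sqrt{D}$, which can then be relaxed to $D$ using $\sqrt{D} \le D$ for $D \ge 1$; either path closes the proof, but the telescoping argument is preferable because it reaches the stated constant without introducing and then discarding a tighter bound, and because it makes transparent how each quantized coordinate contributes at most $L\|\Delta\theta\|_\infty$ to the overall deviation. No genuinely hard step arises here; the work is entirely in setting up the single-coordinate decomposition cleanly and in being careful that the per-coordinate $\ell_2$ distance is bounded by the $\ell_\infty$ norm of the full perturbation.
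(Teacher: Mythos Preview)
Your argument is correct. Both routes you describe---the telescoping over coordinates and the direct norm inequality $\|\theta - Q(\theta)\|_2 \le \sqrt{D}\,\|\theta - Q(\theta)\|_\infty \le D\,\|\Delta\theta\|_\infty$---establish the stated bound, and your preference for the telescoping version is reasonable since it lands on the constant $D$ without first proving something sharper.

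As for comparison: the paper does not actually supply a proof of this proposition; it is stated and then the text moves immediately to the next proposition on input perturbation. So there is no ``paper's own proof'' to match against here. Given the $\ell_2$ Lipschitz hypothesis the paper writes down, the one-line norm-comparison route is the most direct derivation and is presumably what the authors had in mind, but your telescoping argument is equally valid and arguably more informative about where the factor $D$ originates.
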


Next, we consider the binary classification task, which can be generalized into more general settings such as multi-class classification. 

\begin{proposition}[Robustness against input perturbation]
    Assume an input is bounded away from decision line $\ell: w^T x + b = 0$, with distance $d$. 
    For inputs noise $\Delta x$ small enough ($(w^T x + b)(w^T (x+ \Delta x) + b) >0$), there is a quantization method $Q$ such that
    \begin{equation}
        \left (w^{*\top} x + b^* \right ) \left (Q(w)^T (x+\Delta x)+ Q(b) \right ) \geq 0
    \end{equation}
    In other words, the quantized model is robust against input perturbation when the input is bounded away from 0. 
\end{proposition}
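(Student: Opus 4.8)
The plan is to reduce the statement to a one-dimensional margin argument: the perturbed input, evaluated under the optimal affine classifier, sits a fixed distance from zero, and a sufficiently fine quantizer cannot move the quantized output across that margin. Throughout I write $g^*(z) = w^{*\top} z + b^*$ for the optimal classifier and $g_Q(z) = Q(w)^\top z + Q(b)$ for its quantized counterpart, and I take the quantizer to satisfy $\|Q(w) - w^*\|_\infty \le \Delta\theta$ and $|Q(b) - b^*| \le \Delta\theta$, as in the setup preceding the previous proposition (here I read the unstarred $w,b$ in the hypothesis as the reference weights $w^*,b^*$).

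First I would make the distance hypothesis quantitative. Since $x$ lies at Euclidean distance $d$ from the hyperplane $\{g^* = 0\}$, we have $|g^*(x)| = d\,\|w^*\|_2$. The noise hypothesis $(w^{*\top}x + b^*)(w^{*\top}(x+\Delta x) + b^*) > 0$ says precisely that $m := g^*(x+\Delta x)$ is nonzero and shares the sign of $g^*(x)$. Thus the sign the quantized model must reproduce is $\operatorname{sign}(m)$, and $|m| > 0$ is the margin to be protected.

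Next I would compare the quantized output to this margin by writing
\begin{equation}
    g_Q(x+\Delta x) = m + \underbrace{(Q(w) - w^*)^\top (x+\Delta x) + (Q(b) - b^*)}_{=:E}.
\end{equation}
By H\"older's inequality (the $\ell_\infty$--$\ell_1$ duality) the perturbation obeys $|E| \le \Delta\theta\,(\|x+\Delta x\|_1 + 1)$. Choosing any quantizer whose granularity satisfies $\Delta\theta \le |m| / (\|x+\Delta x\|_1 + 1)$ forces $|E| \le |m|$, so $g_Q(x+\Delta x)$ has the same sign as $m$ (or vanishes), and therefore $g^*(x)\,g_Q(x+\Delta x) \ge 0$. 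Such a quantizer exists because $|m| > 0$; for instance, rounding each weight to the nearest multiple of a small enough $\Delta\theta$ works, which settles the existence claim.

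The main obstacle is conceptual rather than computational: deciding whether $Q$ may depend on the particular input $x$. If $Q$ is allowed to depend on $x$ (as the phrasing ``there is a quantization method'' suggests), the margin $|m|$ is a fixed positive number and the bound above closes immediately. If instead a single quantizer must serve all inputs at once, one additionally needs a uniform positive lower bound on $|m|$ — available from the assumption that inputs stay a distance $d$ from the decision line, provided the noise leaves a uniform sign-slack and the input domain is bounded — after which the same H\"older estimate yields one admissible $\Delta\theta$ valid for every input. I would also flag the harmless edge case $|E| = |m|$, which produces equality to zero and is exactly why the conclusion is stated with $\ge 0$ rather than a strict inequality.
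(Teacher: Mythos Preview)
Your argument is correct and follows essentially the same margin approach as the paper: write the quantized output as the true perturbed margin $m$ plus a quantization error, then choose the quantization scale fine enough that the error cannot exceed $|m|$. The paper's version is terser---it assumes $\|x\|\le 1$, sets the quantization bound to $\tfrac{1}{2}\bigl(w^\top(x+\Delta x)+b\bigr)$, and omits the explicit H\"older step and the edge-case and uniformity discussion you include---but the core idea is identical.
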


\begin{proof}
    Without loss of generality, assume $w^\top x + b > 0$, therefore $(w^\top (x+ \Delta x) + b) >0$. 
    The inputs are normalized such that $\|x\| \leq 1$.
    Moreover, let the quantization scale be small enough such that $\max(\|Q(w) - w\|, \|Q(b) - b\|) \leq \frac{1}{2} (w^\top (x+ \Delta x) + b).$
    It can be seen 
    \begin{equation}
        Q(w)^T (x+\Delta x)+ Q(b) \geq 0.
    \end{equation}

    Therefore the quantized model is robust against the inputs noise. 
\end{proof}

% https://arxiv.org/abs/2301.10902
% Cite our HDC paper, I feel there is no need to make an isolate statement. 
As can be seen in Section ~\ref{experiments_}, the low-dimension model can also potentially improve the robustness and efficiency of the model. 
% \begin{proposition}[Robustness in HDC]    
% \end{proposition}

\subsection{Embedding layer}
%We first quantize the input data from the sensor to $n$-bit fixed point integer number for high robustness and low latency. We use the very first bit to represent the sign and the others to represent the real number. Therefore, the n-bit fixed-point integer number can represent the following list $\{-2^{n-1}, -2^{n-1}+1, ..., 0, ..., 2^{n-1}-1, 2^{n-1}\}$.As shown in Algorithm 1, the input was first normalized to representable range $[\dfrac{1}{2^{n-1}}-1, 1-\dfrac{1}{2^{n-1}}]$. We quantized the trained weights by first normalizing to the representable range $[\dfrac{1}{2^{n-1}}-1, 1-\dfrac{1}{2^{n-1}}]$, and then quantized to the fixed-point integer numbers.

In this section, we detail the procedure for converting sensor input data to achieve robustness and energy efficiency.

Sensor input data is first converted into an $n$-bit fixed-point integer. Here, the leading bit represents the sign, and the subsequent bits encode the real number. As such, an $n$-bit fixed-point integer captures the range $[-2^{n-1}+1, ..., 2^{n-1}-1]$.

As outlined in Algorithm 1, the input data is first normalized to fit the range $[\dfrac{1}{2^{n-1}}-1, 1-\dfrac{1}{2^{n-1}}]$. It is then mapped to fixed-point integer values. Similarly, the trained weights and biases undergo normalization to this range before their conversion into fixed-point integer values.

To enhance power efficiency and prepare data for the SNN layers, the integer inputs are encoded into spike trains over a time step of $T$, producing only binary values (0 or 1). This encoding is based on a standard input spike generation method cited in this study~\cite{yan2022low}. For an integer input $x$, if the value \(V\) exceeds a set threshold \(\theta\), the system registers a spike ('1') and decreases \(V\) by \(\theta\). Otherwise, it outputs '0'.

\begin{algorithm}[ht]
\caption{``SpikeGen'': embedding layer spike train generation } 
\label{alg1}

\begin{minipage}[t]{0.48\linewidth}
\begin{algorithmic}
\REQUIRE Input data $x$ with max input of $m$; Initial Membrane voltage $V = 0$.
\ENSURE Input spike train $\bm{s}$ of length $T$ (SNN Time step); Threshold $\theta$ for the input SNN layer .
\STATE  $\bm{STEP1}$. \textbf{Input layer (pro-precessing, quantized to $n$ bits)}:
\STATE $\bm{x} = (1-1/(2^{n-1})) * \dfrac{\bm{x}}{m}$
\STATE $\bm{x} = \lfloor \bm{x}\cdot 2^{n-1} \rfloor$
\STATE $\theta =(1-1/(2^{n-1}))*\dfrac{2^{n-1}}{m} $ (define Initial theta for Step 2 and 3)
\STATE
\STATE $\bm{STEP2}$. \textbf{Weight/bias quantization with $q$ level} 
\STATE $f$ = max\{$\bm{w}$,$\bm{bias}$\}
\STATE $\bm{w} = (1-1/(2^{q-1})) \cdot \dfrac{\bm{w}}{f}$
\STATE $\bm{b} = (1-1/(2^{q-1})) \cdot \dfrac{\bm{b}}{f}$

\end{algorithmic}
\end{minipage}\hfill
\begin{minipage}[t]{0.48\linewidth}
\begin{algorithmic}
\STATE $\bm{w} = \lfloor \bm{w}\cdot 2^{q-1} \rfloor$
\STATE $\bm{b} = \lfloor \bm{b}\cdot 2^{q-1} \rfloor$
\STATE $\theta = \theta * (1-1/(2^{q-1}))*\dfrac{2^{q-1}}{f}$
\STATE
\STATE $\bm{STEP3}$. \textbf{Input spike train generation:}
\FOR{ $i= 0$ to $T-1$}
\STATE $V = V+(\bm{w}*x+\bm{b})$
\IF{$V \geq  {\theta}$} 
\STATE $\bm{s}_i = 1$
\STATE $V = V - \theta$
\ELSE
\STATE $\bm{s}_i = 0$ // The shape of $\bm{s}$ should then be (x.shape,T) 
\ENDIF
\ENDFOR
\end{algorithmic}
\end{minipage}

\end{algorithm}

The mapping process details are depicted in Figure ~\ref{fig:workflow_1}. It's pivotal to adjust the threshold \(\theta\) based on the quantization levels of the input data \(x\) and weights and biases. As per ~\cite{yan2022low}, given an input \(x\) scaled by \((1-1/(2^{n-1}))*2^{n-1}/m\), the threshold \(\theta\) should be proportionally adjusted. Here, \(m\) is the input normalization, \( (1-1/(2^{n-1})) \) is the scaling factor, and \( 2^{n-1} \) handles the decimal-to-integer conversion. Similarly, during weight quantization, the threshold \(\theta\) in the SNN neuron must be adjusted in a corresponding manner. Notably, both the scaling of weight/bias and the adjustment of the threshold can be conducted prior to inference.

\begin{figure*}[ht]
    \centering
    \includegraphics[width = 0.8\linewidth]{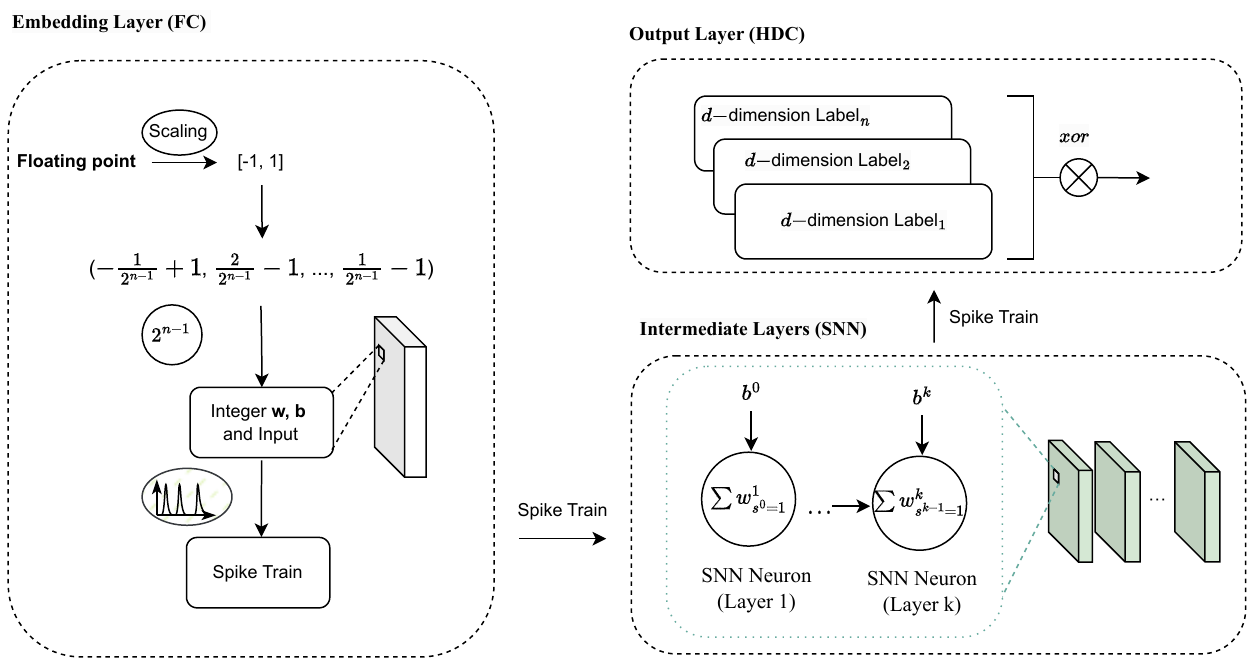}
        \caption{Inference of HyperSNN.}
    \label{fig:workflow_1}
\end{figure*}

\begin{comment}
\subsubsection{For Classification Problem}

Contrary to control problem inputs, which are derived from potentially continuous tensor signals that make small quantization levels impractical due to minute differences between each input, the input data for classification problems typically comprise recognizable images or signals. For instance, the clear distinction between pixel values from 0 to 255 in images facilitates easier identification compared to sensor inputs. As such, in this section, we put forward a more straightforward encoding approach. This process involves initially quantizing the input:

\begin{equation}
\label{eq:quantization}
    Q_T(x) = \frac{\lfloor x\cdot T\rfloor}{T}.
\end{equation}

, followed by the generation of a spike train.

\begin{algorithm}[ht]
\caption{Parameterized Input Encoding algorithm.} 
\label{alg2}

\begin{algorithmic}
\REQUIRE Input data $x \in [0,1]$. Threshold $\gamma \in (0,1]$ \\
\ENSURE Spike train $\bm{s}$ of length $T$.\\
\STATE Membrane voltage $V = 0$
\FOR{ $i= 0$ to $T-1$}
\STATE $V = V+x$
\IF{$V >=  {\bm{\gamma}}$} 
\STATE $\bm{s}_i = 1$
\STATE $V = V - \gamma$
\ELSE
\STATE
$\bm{s}_i = 0$
\ENDIF
\ENDFOR
\end{algorithmic}
\end{algorithm}
\end{comment}

\subsection{Intermediate layers}

In our architecture, we replace the commonly used fully connected layers in intermediary stages with a spiking neural network (SNN). This modification shifts from the resource-intensive FP32 multiplications to the more efficient INT8 additions. During training, to reduce the MLP-to-SNN conversion loss, the activation function in fc layers is substituted with a clamping and quantization function. For inference, these layers transition to SNNs, with the clamping and quantization function replaced by the averaging IF model~\cite{yan2022low}. The input spike train, derived from Algorithm~\ref{alg1} step 3, is fed into the SNN, producing outputs via the averaging IF model.

Algorithm~\ref{alg3} details the process: for each layer \(l\), the membrane potential is computed as \(\sum_{j} (w^l_{ s^{l-1}=1}+b^l)\) at each timestep. This is then averaged over \(T\) to obtain \(V^l\), which is passed to the standard Integrate-and-Fire (IF) model. At every timestep \(t\), \(V^l\) from previous steps is integrated to compute \(U^l\). When \(U^l\) exceeds a threshold \(\theta\), a spike (denoted by '1') is produced, and \(V_i^l\) decreases by \(\theta\). Otherwise, the output remains '0'.
\begin{algorithm}[ht]
\caption{``InterLayAct'': SNN activation function among intermediate layers.} 
\label{alg3}
\begin{algorithmic}[1]
\REQUIRE Quantized weights $\bm{w^l}$ at layer $l$ with quantization level $q$; Input spikes $s^{l}$  at layer $l$; Quantized bias $b^l$ at layer $l$ with quantization level of $q$; Membrane potential $U^l$ of at layer $l$ (initialize to 0). Default threshold $\theta$.
\STATE \textbf{STEP1: compute the average}
\STATE $V^l  = 0$
\FOR{$t=1$ to $T$}
\STATE $V^l += \sum (\dfrac{w^l_{ s_t^{l-1}=1}}{T})+ \dfrac{b^l}{T}$ // Division of $T$ can be merged into previous weights and biases
\ENDFOR

\STATE
\STATE \textbf{STEP2: IF Model}
\STATE $ U^l(0) = 0$;
\FOR{ $i= 0$ to $T-1$}
\STATE $U^l = V^l+U^{l-1}$
\IF{$U^l >=  {\theta}$} 
\STATE $\bm{s^l}_i = 1$
\STATE $U^l = U^l - \theta$
\ELSE
\STATE
$\bm{s^l}_i = 0$
\ENDIF
\ENDFOR

\STATE \textbf{return} $s^l$ - the spike train of of layer $l$. 

\end{algorithmic}
\end{algorithm}

% Computer Society journal (but not conference!) papers do something unusual
% with the very first section heading (almost always called "Introduction").
% They place it ABOVE the main text! IEEEtran.cls does not automatically do
% this for you, but you can achieve this effect with the provided
% \IEEEraisesectionheading{} command. Note the need to keep any \label that
% is to refer to the section immediately after \section in the above as
% \IEEEraisesectionheading puts \section within a raised box.
\subsection{Output layer}

As illustrated in Figure~\ref{fig:workflow_1}, the spike train \( s^k \) produced post-intermediate layers consists exclusively of binary values 0 and 1. This allows for comparison with labels spanning \( c \) classes to identify the nearest match. To establish the label for each class, we adopt the majority rule described in Algorithm~\ref{alg_mr}. We first accumulate the \( N \) \( s^c_k \) values associated with class \( c \), yielding an integer representation \( L_c \) for the class. If an element in \( L_c \) exceeds a threshold \( \theta \), it's assigned a 1; otherwise, it's marked as 0. This procedure yields a binary label. During inference, the Hamming distance determines the nearest label to finalize the output.

\begin{algorithm}[ht]
\caption{``HyGen'': representation hypervector generation}
\label{alg_mr}
\begin{algorithmic}[1]
\REQUIRE $N$ number of spike output $\bm{s^c_k}$;
\ENSURE Pre-defined threshold $\theta$; 
\STATE   $L_c = 0$
\FOR{$i=1$ to $N$}
\STATE $L_c+=s^c_k$
\ENDFOR
\FOR{$i=1$ to $d$}
\IF {$L_c[i]>\theta$}
\STATE $L_c[i]=1$
\ELSE
\STATE $L_c[i]=0$
\ENDIF
\ENDFOR
\end{algorithmic}
\end{algorithm}
For improved energy efficiency, we can streamline the binary representation label by discarding identical items across all representation labels, \(L_c\). Taking the Cartpole environment with Net1 as an instance: the binary label for class1 "Push cart to the left" is [1., 1., 1., -1., -1., 1., 1., 1., -1., 1.], whereas for "Push cart to the right" it is [1., 1., 1., -1., -1., 1., 1., 1., -1., -1.]. Directly employing the Hamming distance for label identification necessitates 20 xor operations. However, by noting that nine out of ten items are congruent between the two labels, we can truncate the length from ten to just one, reducing the identification process to a mere two xor operations.

\section{Experiments}
\label{experiments_}
\subsection{Experiment setup}
Our model HyperSNN has been implemented based on CUDA-accelerated PyTorch version 1.6.0 and 1.7.1 in this paper. We train the model using PyTorch version 1.7.1 and run the SNNs with PyTorch version 1.6.0. The experiments were performed on an Intel Xeon E5-2680 server with 256GB DRAM, a Tesla P100 GPU, and a GeForce RT 3090 GPU, running 64-bit Linux 4.15.0. The model in this paper were implemented and trained using Pytorch on a server equipped with an Nvidia Tesla P100 card and a GeForce RT 3090 GPU. 

\subsection{Network models}

In our study of control problems, we designed and tested four distinct network architectures, each tailored to a specific environment. Utilizing deep Q-Networks (DQN)~\cite{fan2020theoretical} as a foundational approach, we trained each network and subsequently applied layer-wise fine-tuning. This fine-tuning encompassed techniques such as output quantization, weight quantization, and hyperdimensional computing. The networks, labeled as 1 through 4, are optimized for the Cartpole, Acrobot, Mountain Car, and Lunar Lander tasks, respectively. Notably, any reduction in network size resulted in a substantial decline in accuracy. Thus, our designs consistently featured 2 to 3 layers of spiking fully connected networks (FCNs) and capped the output size at 64. A comprehensive overview of the network configurations employed in this research is provided in Table~\ref{tab:network_com}.

Using Net3 as an example, we shall show how we compute the energy consumption of each layer by referencing energy values from Horowitz's presentation at ISSCC 2014~\cite{horowitz20141} and the analysis presented at CICN 2011~\cite{nishad2011analysis}. While this is fairly old data (we were unable to find similar ones that are newer), it is worthwhile to point out that (i) smart devices do not often use the latest and greatest silicon technology due to cost reasons and (ii) the relative difference in magnitude of the values in the comparison is likely to hold regardless of the technology used. With these in mind, the following energy values are used:
\begin{itemize}
    \item FP32 multiplications: 3.7pJ; 32-bit integer multiplications: 3.1pJ; INT8 multiplications: 0.2pJ
    \item FP32 add: 0.9pJ; 32-bit integer add: 0.1pJ; INT8 add: 0.03pJ
    \item Xor operation: 0.00243pJ
\end{itemize}

The energy calculations for the baseline MLP are:
\begin{itemize}
    \item \textbf{Input layer}: 2 $\times$ 24 FP32 multiplications and 24 FP32 additions yield a total energy of 199.2pJ.
    \item \textbf{Intermediate layer}: 24 $\times$ 24 FP32 multiplications and 24 FP32 additions result in an energy of 2152.8pJ.
    \item \textbf{Output layer}: 24 $\times$ 3 FP32 multiplications and 3 FP32 additions give 269.1pJ.
\end{itemize}

For our 8-bit input and 8-bit SNN (T=1) model integrated with HDC:
\begin{itemize}
    \item \textbf{Input layer}: Following Algorithm~\ref{alg1}, input scaling and initial processing necessitate 2 FP32 multiplications, 2 $\times$ 24 INT8 multiplications (display as "48+2" in table~\ref{tab:network_mountaincar}), and 24 INT8 additions. This amounts to 17.72pJ (spike train generation energy is excluded with T=1).
    \item \textbf{Intermediate layer}: The energy is computed for 24 $\times$ 24 $\times$ spike rate INT8 additions and 24 INT8 additions. Taking a conservative approach with a spike rate of 1, the total is 18pJ (excluding average membrane potential with T=1).
    \item \textbf{Output layer}: The energy for 24 $\times$ 3 xor operations totals 0.05832pJ.
\end{itemize}

\subsection{Control problem analysis}
For the control problem, we employ four renowned, publicly accessible classic control environments from AI gyms, ranging from the simple Cartpole and Acrobot, to the more complex MountainCar and Lunar Lander for simulations. Our models are validated by considering three crucial metrics: rewards, operation count, and robustness. 

\subsubsection{Cartpole}
The environment emulates the cart-pole problem described by Barto, Sutton, and Anderson~\cite{barto1983neuronlike}. In this setting, a pendulum attached to a cart via an unactuated joint allows the cart to move on a frictionless track. The goal is to maintain the pole upright by applying left or right forces to the cart. Success is indicated by a longer duration of balanced pole. For each timestep the pole stays upright, the agent is rewarded with +1. An episode concludes if the pole deviates over 15 degrees from the vertical position or the cart shifts beyond 2.4 units from the center.

\begin{figure*}[ht] 

\begin{minipage}[t]{0.49\linewidth} 
\centering
\includegraphics[width = 0.8\linewidth]{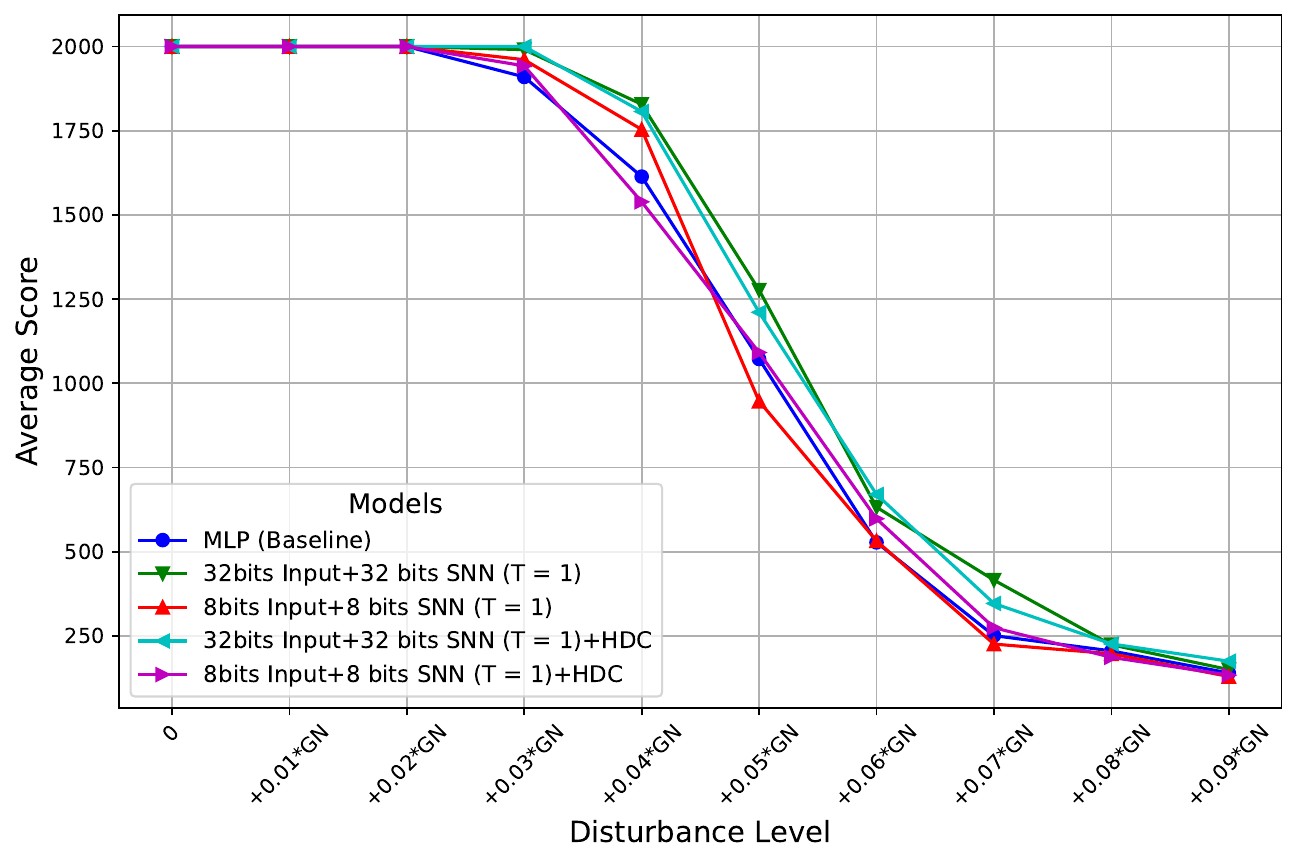}
\caption{Performance under Gaussian noise (Cartpole)} 
\label{robust_cartpole_0} 
\end{minipage}
\hfill
\begin{minipage}[t]{0.49\linewidth}
\centering
\includegraphics[width = 0.8\linewidth]{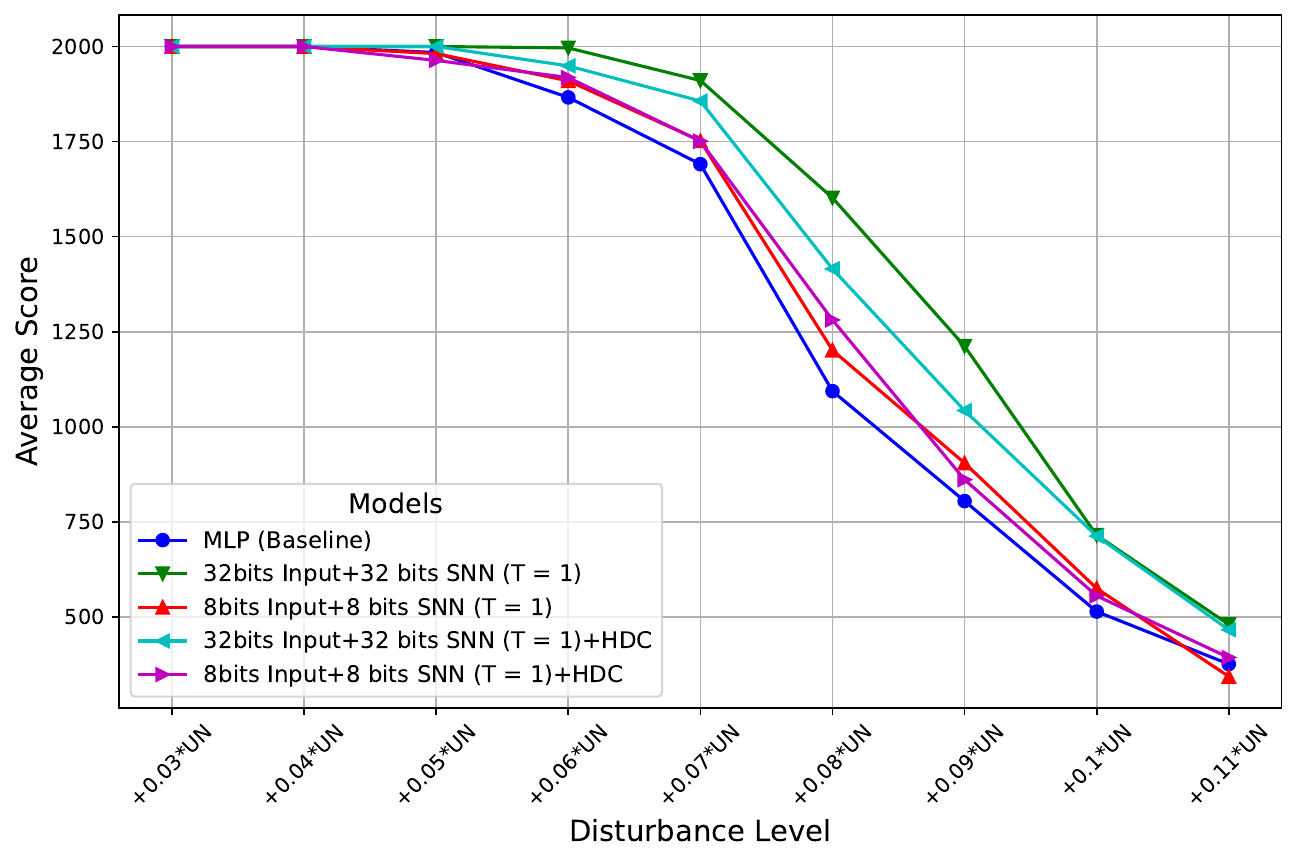}
\caption{Performance under Uniform noise (Cartpole)}
\label{robust_cartpole_1}
\end{minipage}
\vspace{4ex}

\begin{minipage}[t]{0.49\linewidth} 
\centering
\includegraphics[width = 0.8\linewidth]{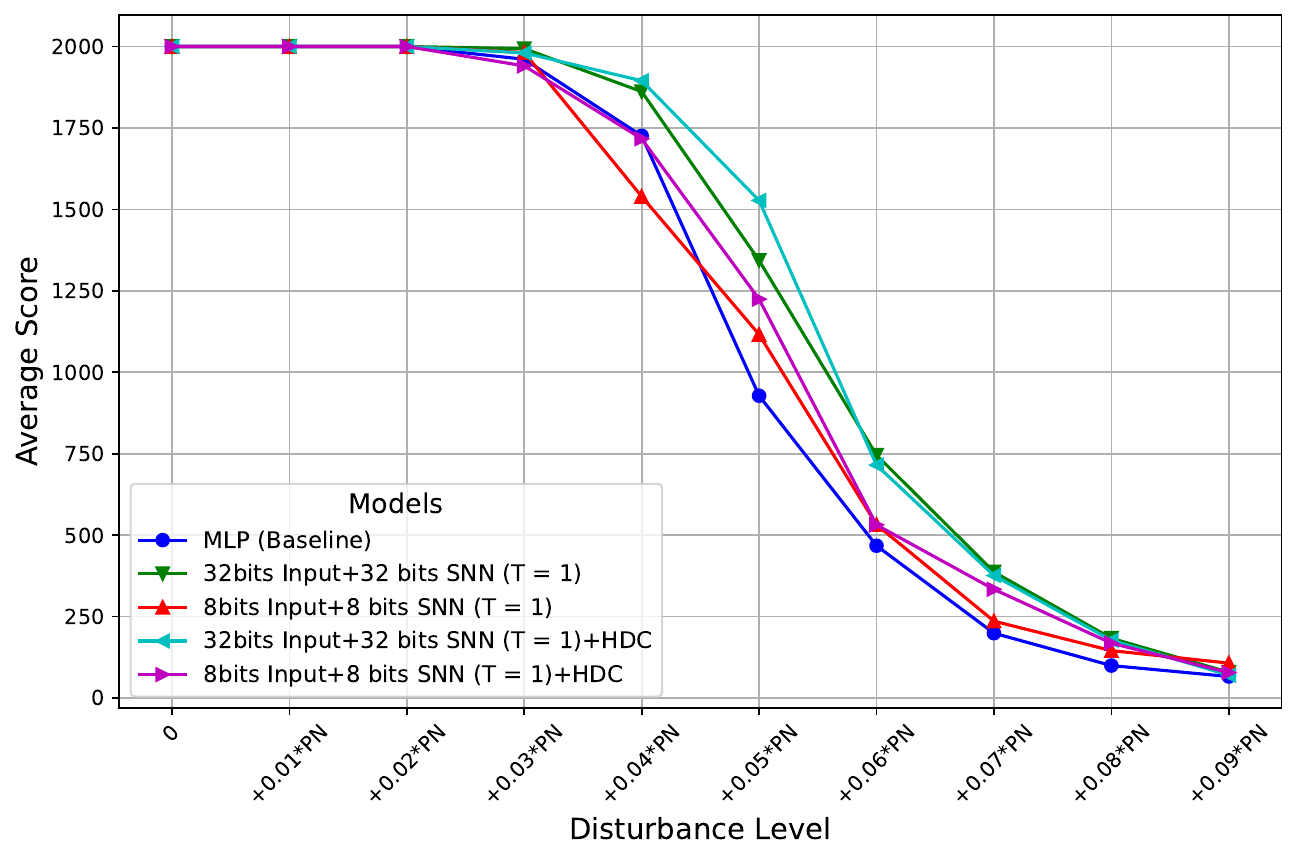}
\caption{Performance under Poisson noise (Cartpole)} 
\label{robust_cartpole_2} 
\end{minipage}
\hfill
\begin{minipage}[t]{0.49\linewidth}
\centering
\includegraphics[width = 0.8\linewidth]{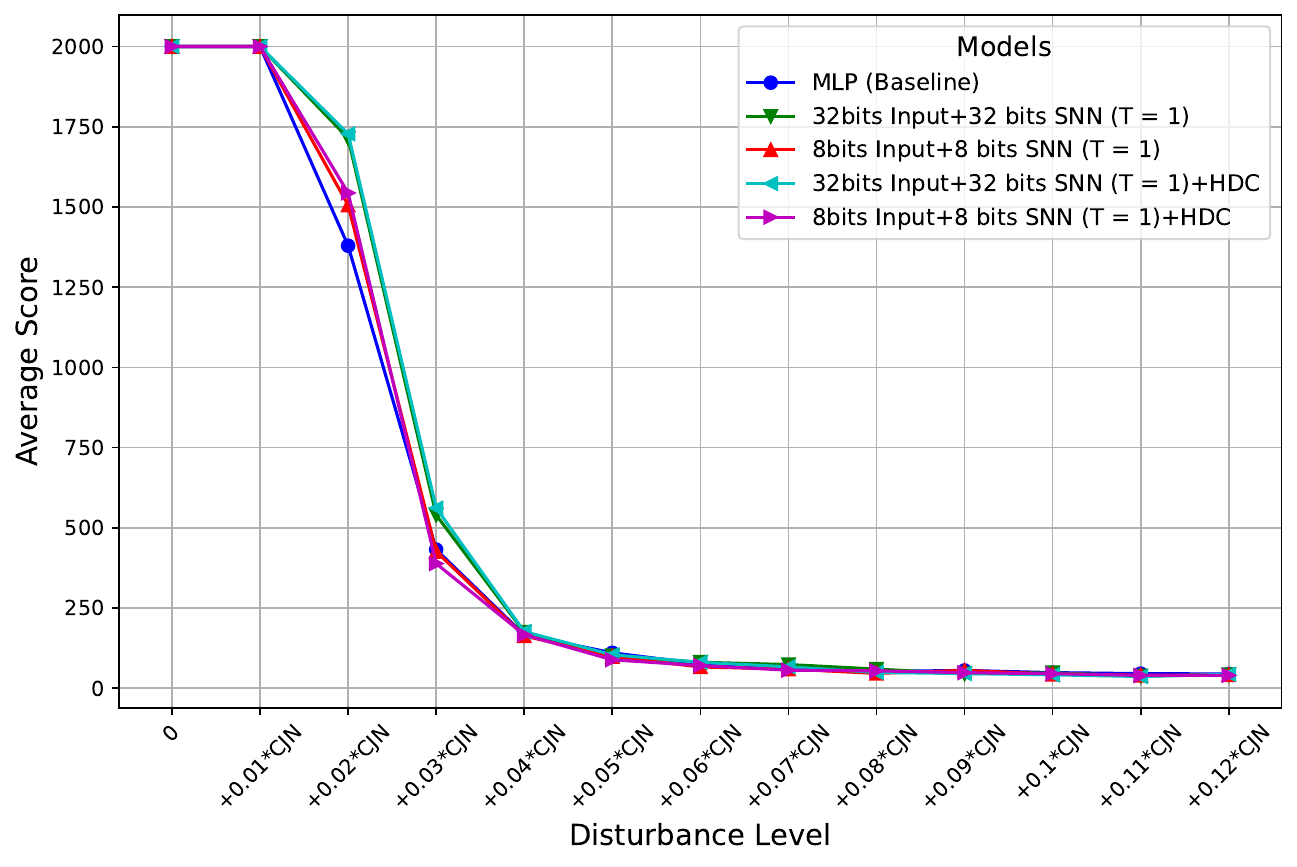}
\caption{Performance under Clock jitter noise (Cartpole)}
\label{robust_cartpole_3}
\end{minipage}

\end{figure*}

We assess the performance of our models using the Net1 network structure across varied weight and input quantization levels, both with and without hyperdimensional computing (HDC). For comparison, we employ a pure MLP model with the same architecture. While the AI gym environment marks success at 500 steps, we extend our evaluation to 2000 steps to observe the cartpole's balance more comprehensively. We conducted each experiment under 100 distinct initial conditions and averaged the results to bolster reliability. As delineated in Table~\ref{tab:network_cart}, our quantized models consistently matched the baseline MLP's performance but with reduced energy consumption. Notably, our model with 8-bit quantized input and weight/bias in the input and intermediate layers (SNN), coupled with HDC at the output, demonstrated high performance analogous to the baseline. However, its energy consumption was just 23.10243 pJ—only 9.96\% of the energy requirement of the MLP baseline.

\begin{table}[ht]
\small
%\scalebox{1.2}{
\begin{threeparttable}

\begin{tabular}{c|ccc|cclc|l}
\hline
\multirow{2}{*}{Cartpole}                                                                             & \multicolumn{3}{c|}{Input layer}                                                     & \multicolumn{4}{c|}{Intermediate/Output layer}                                                                                    & \multirow{2}{*}{Rewards} \\ \cline{2-8}
                                                                                                     & \multicolumn{1}{c|}{Adds} & \multicolumn{1}{c|}{Mults} & \multicolumn{1}{c|}{Energy} & \multicolumn{1}{c|}{Adds}           & \multicolumn{1}{c|}{Mults}        & \multicolumn{1}{c|}{Bool} & \multicolumn{1}{c|}{Energy} &                          \\ \hline
\begin{tabular}[c]{@{}c@{}}MLP (baseline)  (FP32)\end{tabular}                            & \multicolumn{1}{c|}{10}    & \multicolumn{1}{c|}{40}    & 157pJ                       & \multicolumn{1}{c|}{2}              & \multicolumn{1}{c|}{20 } & \multicolumn{1}{c|}{0}    & 75pJ                        & 2000                     \\ \hline
\begin{tabular}[c]{@{}c@{}}32bits input+ 32bits SNN (T = 1)\\ (INT32)\end{tabular}      & \multicolumn{1}{c|}{10}    & \multicolumn{1}{c|}{40+4}    & 139.8pJ                       & \multicolumn{1}{c|}{22 } & \multicolumn{1}{c|}{0}            & \multicolumn{1}{c|}{0}    & 2.2pJ                      & 2000                     \\ \hline
\begin{tabular}[c]{@{}c@{}}8bits input+ 8bits SNN (T = 1)\\ (INT8)\end{tabular}         & \multicolumn{1}{c|}{10}    & \multicolumn{1}{c|}{40+4}    & 23.1pJ                       & \multicolumn{1}{c|}{22 } & \multicolumn{1}{c|}{0}            & \multicolumn{1}{c|}{0}    & 0.66pJ                      & 2000                     \\ \hline
\begin{tabular}[c]{@{}c@{}}32bits input+ 32bits SNN (T = 1) +HDC\\ (INT32)\end{tabular} & \multicolumn{1}{c|}{10}    & \multicolumn{1}{c|}{40+4}    & 139.8pJ                         & \multicolumn{1}{c|}{0}              & \multicolumn{1}{c|}{0}            & \multicolumn{1}{c|}{1}    & 2.43fJ                       & 2000                     \\ \hline
\begin{tabular}[c]{@{}c@{}}8bits input+ 8bits SNN (T = 1)+HDC\\ (INT8)\end{tabular}     & \multicolumn{1}{c|}{10}    & \multicolumn{1}{c|}{40+4}    & 23.1pJ                       & \multicolumn{1}{c|}{0}              & \multicolumn{1}{c|}{0}            & \multicolumn{1}{c|}{1}    & 2.43fJ                       & 2000                     \\ \hline
\end{tabular}
\caption{Comparison of performance and energy consumption (Cartpole).}
\label{tab:network_cart}

\begin{tablenotes}
\item {
In our convention, ``\(x\)-bits input + \(x\)-bits SNN'' denotes employing \(x\)-bit input, weight, and bias in the input layer, and \(x\)-bit weight and bias in the intermediate layer. The descriptors ``with HDC'' and ``without HDC'' indicate the incorporation of HDC or SNN in the terminal output layer, respectively. A reward of \(x\) represents the cartpole's ability to sustain balance for \(x\) steps prior to faltering.
 }
\end{tablenotes}
\end{threeparttable}
\end{table}

In scenarios such as wearable devices and smart home systems, the limitations of hardware often necessitate the use of low-sensitivity sensors. It is imperative to evaluate our model's robustness to noise in these settings, particularly in comparison to traditional MLP models.

We introduce four types of noise in our assessment:

\begin{itemize}
    \item \textbf{Gaussian noise (GN)}: Referenced on the x-axis of Figure~\ref{robust_cartpole_0}, this noise has a mean of 0 and a variance of 1. It serves as a versatile representation of real-world disturbances like measurement errors or electronic thermal noise. Given the Cartpole's average input of [0.04, 0.11, 0.003, 0.16], we incorporate Gaussian noise scaled by \(k \times 0.01\), with \(k \in [0, 9]\) for our evaluation.
    
    \item \textbf{Uniform noise (UN)}: Indicated on the x-axis of Figure~\ref{robust_cartpole_1}, this noise varies uniformly between -1 and 1. It is suitable when the exact nature of noise is indeterminate but confined within a known range. We introduce Uniform noise scaled by \(k \times 0.01\) (for \(k \in [3, 11]\) in our study. Results with \(k<3\) consistently achieved maximum scores.
    
    \item \textbf{Poisson noise (PN)}: Displayed on the x-axis of Figure~\ref{robust_cartpole_2}, this noise, also known as shot noise, stems from the variability in discrete particle events, like those in imaging devices. Using the Cartpole's average input of approximately 0.1, we integrate Poisson noise scaled by \(k \times 0.01\), where \(k \in [0, 9]\), into our evaluation.
    
    \item \textbf{Clock jitter noise (CJN)}: Highlighted on the x-axis of Figure~\ref{robust_cartpole_3}, this noise simulates errors from timing inaccuracies in digital systems, often stemming from hardware clock instabilities. In our study, we use Clock jitter noise with a standard deviation of \(k \times 0.01\) noise for \(k \in [0, 8]\).
\end{itemize}

Results spanning Figure~\ref{robust_cartpole_0} to Figure~\ref{robust_cartpole_3} underscore the superior noise resilience of our quantized models, averaging an enhancement of 9.5\% across varied noise types and intensities.

\textbf{Gaussian noise (GN)}: Up to +0.02GN, the models perform comparably. From +0.03GN onwards, the `32bits input+32bits SNN (T = 1)+HDC' model outperforms the baseline by around 18.56\%. At +0.07GN, the `8bits input+8bits SNN (T = 1)+HDC' variant exceeds the baseline by 9.4\%.

\textbf{Uniform noise (UN)}: All models demonstrate consistent behavior until +0.04UN. Beyond this threshold, the `8bits input+8bits SNN (T = 1)+HDC' and `32bits input+32bits SNN (T = 1)+HDC' models achieve robustness improvements of 6.0\% and 19.5\%, respectively. Notably, at +0.08UN, they surpass the baseline by approximately 17\% and 29\%. Across all noise intensities, each quantized SNN variant, with or without HDC, consistently eclipses the baseline.

\textbf{Poisson noise (PN)}: Performance remains uniform across all models up to +0.02PN. Between +0.03PN and +0.09PN, the `8bits input+8bits SNN (T = 1)+HDC' and `32bits input+32bits SNN (T = 1)+HDC' models stand out, achieving robustness surges of 28.39\% and 43.05\% over the baseline, respectively. Specifically, at +0.05PN, these metrics climb to 31.93\% and a striking 64.58\%. Similar to the UN scenario, all quantized SNN configurations consistently outshine the baseline, irrespective of HDC utilization.

\textbf{Clock jitter noise (CJN)}: Model performance begins to decline from +0.02CJN, with none enduring beyond +0.04CJN. As a result, our evaluations are restricted between 0 CJN and +0.04CJN. Within this range, the `32bits input+32bits SNN (T = 1)+HDC' model frequently exhibits a robustness advantage, averaging 12.28\% and peaking at 30\% over the baseline at +0.03CJN. Despite occasional variances across noise profiles, the synergy of quantized SNN and HDC consistently offers tangible benefits over the baseline in noise robustness.

\begin{figure*}[ht] 

\begin{minipage}[t]{0.49\linewidth} 
\centering
\includegraphics[width = 0.8\linewidth]{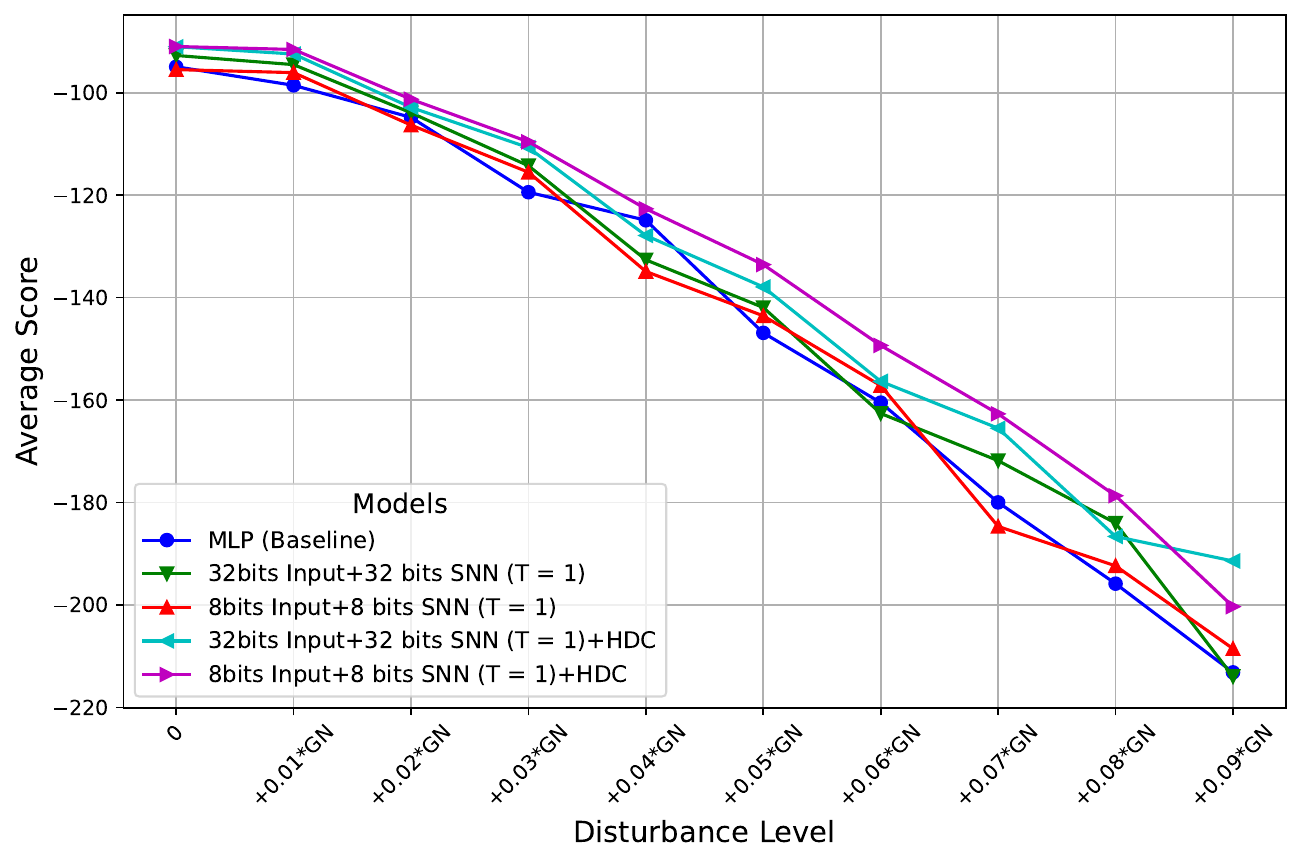}
\caption{Performance under Gaussian noise (Acrobot)} 
\label{robust_acrobot_0} 
\end{minipage}
\hfill
\begin{minipage}[t]{0.49\linewidth}
\centering
\includegraphics[width = 0.8\linewidth]{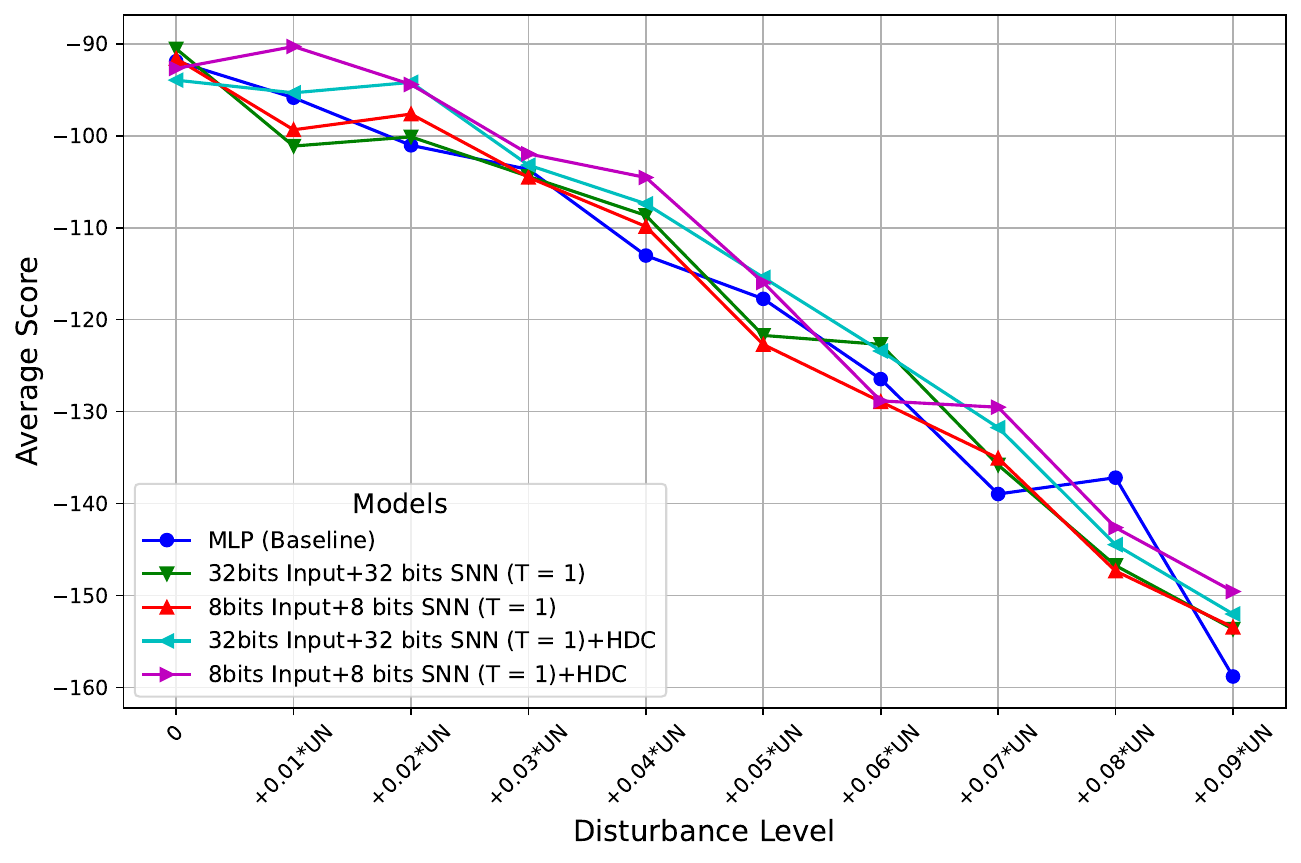}
\caption{Performance under Uniform noise (Acrobot)}
\label{robust_acrobot_1}
\end{minipage}
\vspace{4ex}

\begin{minipage}[t]{0.49\linewidth} 
\centering
\includegraphics[width = 0.8\linewidth]{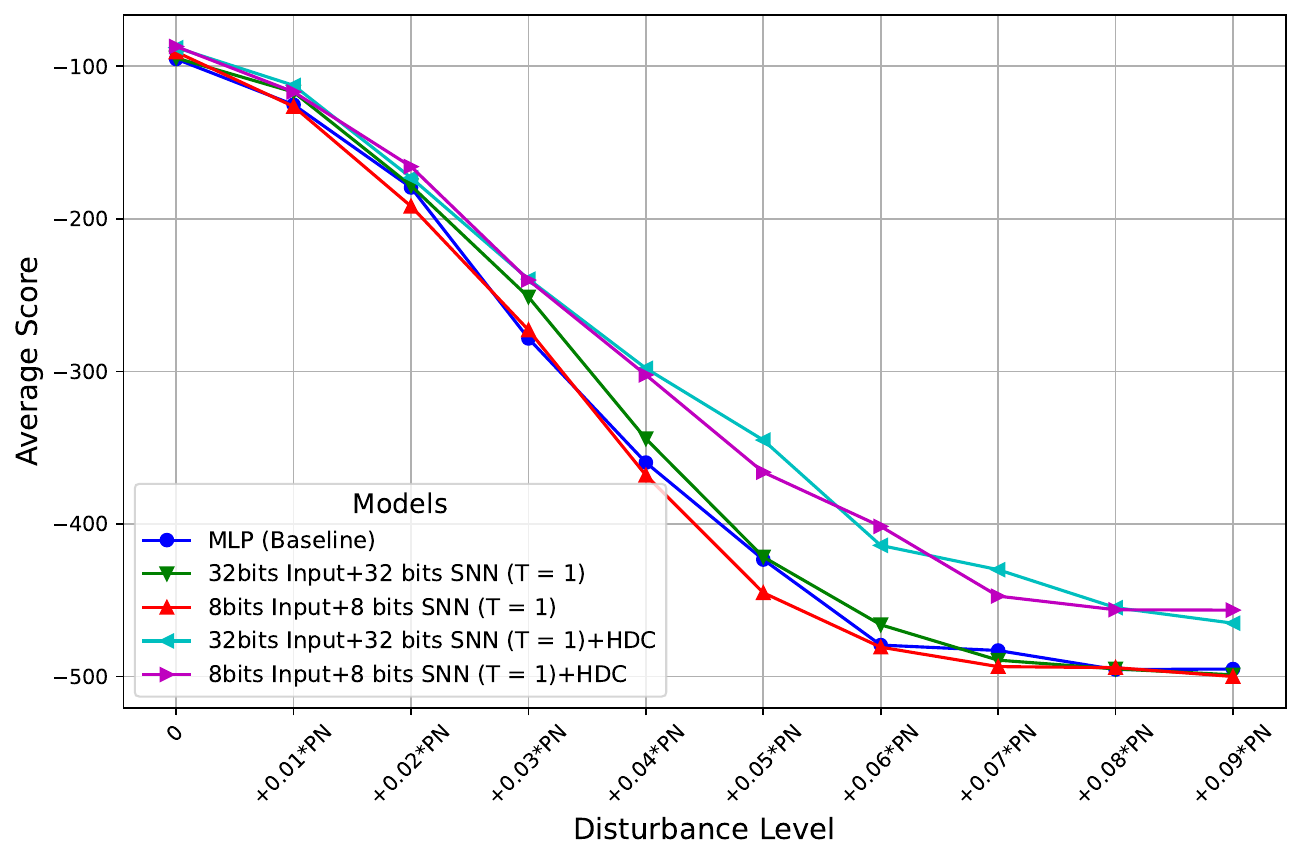}
\caption{Performance under Poisson noise (Acrobot)} 
\label{robust_acrobot_2} 
\end{minipage}
\hfill
\begin{minipage}[t]{0.49\linewidth}
\centering
\includegraphics[width = 0.8\linewidth]{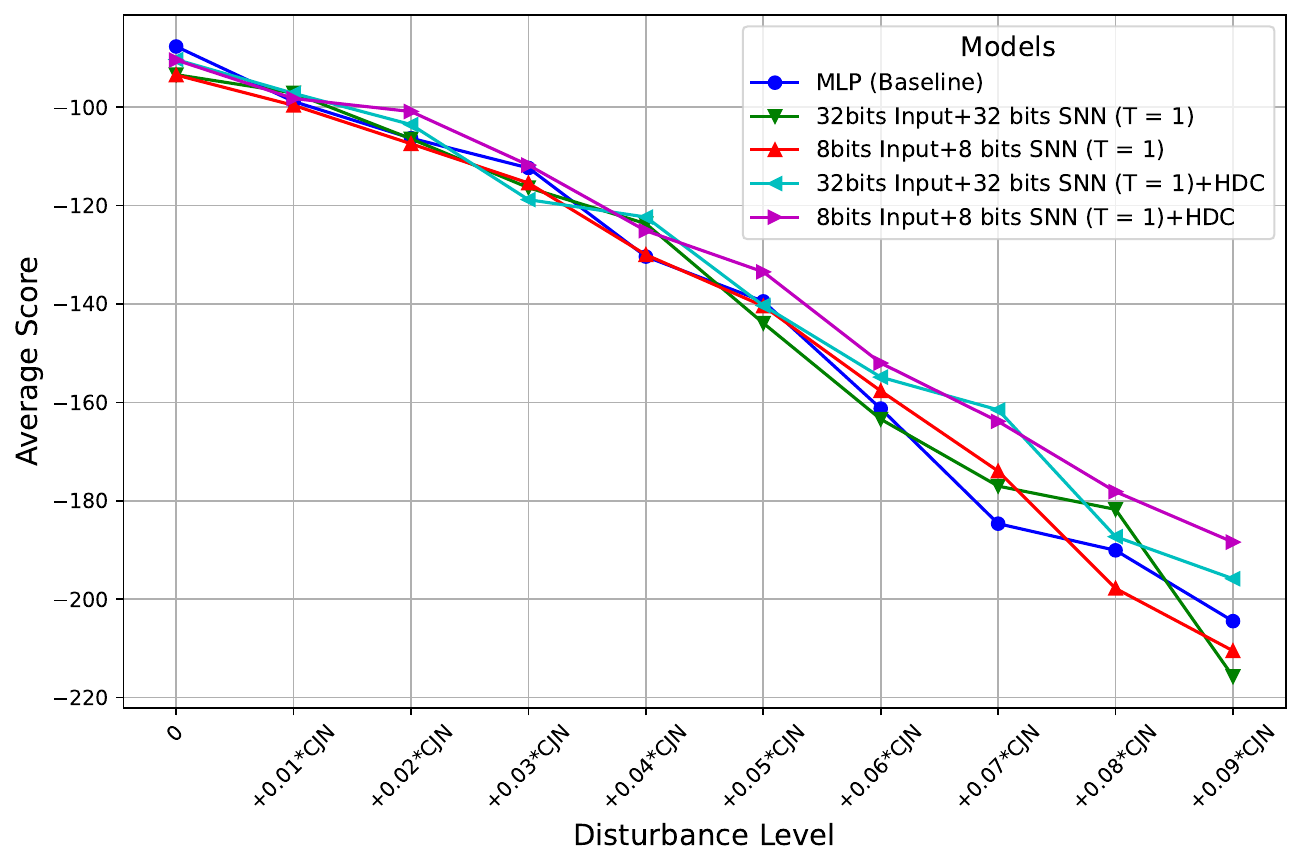}
\caption{Performance under Clock jitter noise (Acrobot)}
\label{robust_acrobot_3}
\end{minipage}

\end{figure*}
\subsubsection{Acrobot}

The Acrobot environment, inspired by Sutton's work~\cite{sutton1995generalization} and elaborated upon in Sutton and Barto's subsequent book~\cite{sutton1999reinforcement}, comprises two links connected linearly. The chain's one end is anchored, while the joint connecting the two links is actuated. The task is to apply torques to the actuated joint to raise the chain's free end above a designated height from an initial, downward-hanging position. The agent incurs a reward of -1 for each time step until the objective is achieved, thereby incentivizing quicker solutions. The episode concludes once the goal is accomplished.

In our study, we assess the efficacy of our models built on the Net2 network structure, under varying weights and input quantization degrees, incorporating hyperdimensional computing (HDC) or excluding it. For benchmarking, we employ a pure MLP model mirroring the network structure. To ensure the Acrobot's balance within a 500-step limit, each experiment is executed with 100 distinct initial settings, with the outcomes subsequently averaged for consistency.

\begin{table}[ht]
\small
%\scalebox{1.2}{
\begin{threeparttable}

\begin{tabular}{c|lcc|cllc|c}
\hline
\multirow{2}{*}{Acrobot}                                                                             & \multicolumn{3}{c|}{Input Layer}                                                     & \multicolumn{4}{c|}{Intermediate/Output Layer}                                                                   & \multicolumn{1}{l}{\multirow{2}{*}{Rewards}} \\ \cline{2-8}
                                                                                                     & \multicolumn{1}{c|}{Adds} & \multicolumn{1}{c|}{Mults} & \multicolumn{1}{c|}{Energy} & \multicolumn{1}{c|}{Adds} & \multicolumn{1}{c|}{Mults} & \multicolumn{1}{c|}{Bool} & \multicolumn{1}{c|}{Energy} & \multicolumn{1}{c}{}                         \\ \hline
\begin{tabular}[c]{@{}c@{}}MLP (baseline)  (FP32)\end{tabular}                            & \multicolumn{1}{c|}{64}    & \multicolumn{1}{c|}{384}   & 1478.4pJ                    & \multicolumn{1}{c|}{3}    & \multicolumn{1}{c|}{192}   & \multicolumn{1}{c|}{0}    & 713.1pJ                     & -93.1                                        \\ \hline
\begin{tabular}[c]{@{}c@{}}32bits input+ 32bits SNN (T = 1)\\ (INT32)\end{tabular}      & \multicolumn{1}{c|}{64}    & \multicolumn{1}{c|}{384+6}   & 1219pJ                    & \multicolumn{1}{c|}{195}  & \multicolumn{1}{c|}{0}     & \multicolumn{1}{c|}{0}    & 19.5pJ                      & -92.82                                       \\ \hline
\begin{tabular}[c]{@{}c@{}}8bits input+ 8bits SNN (T = 1)\\ (INT8)\end{tabular}         & \multicolumn{1}{c|}{0}    & \multicolumn{1}{c|}{384+6}   & 100.92pJ                      & \multicolumn{1}{c|}{195}  & \multicolumn{1}{c|}{0}     & \multicolumn{1}{c|}{0}    & 5.85pJ                      & -94.21                                       \\ \hline
\begin{tabular}[c]{@{}c@{}}32bits input+ 32bits SNN (T = 1) +HDC\\ (INT32)\end{tabular} & \multicolumn{1}{c|}{0}    & \multicolumn{1}{c|}{384+6}   & 1219pJ                    & \multicolumn{1}{c|}{0}    & \multicolumn{1}{c|}{0}     & \multicolumn{1}{c|}{61}   & 0.148pJ                      & -89.17                                       \\ \hline
\begin{tabular}[c]{@{}c@{}}8bits input+ 8bits SNN (T = 1)+HDC\\ (INT8)\end{tabular}     & \multicolumn{1}{c|}{0}    & \multicolumn{1}{c|}{384+6}   & 100.92pJ                      & \multicolumn{1}{c|}{0}    & \multicolumn{1}{c|}{0}     & \multicolumn{1}{c|}{61}   & 0.148pJ                      & -92.4                                        \\ \hline
\end{tabular}
\caption{Comparison of Performance and energy consumption (Acrobot). A reward of $-x$ indicates that the model needs an average of $x$ steps to win the game.}
\label{tab:network_acro}
\end{threeparttable}
\end{table}

In HDC classification, we optimize the binary label representation, trimming its dimension from 64 to 61 by eliminating identical items across all labels. This refines energy efficiency. As depicted in Table~\ref{tab:network_acro}, our quantized models either match or outperform the baseline MLP model in terms of performance, while also achieving energy savings. For example, our model, configured with 8-bit input and weight/bias for both input and intermediate layers (SNN) and supplemented with HDC in the output layer, attains a commendable reward of -92.4 (averaging 92.4 steps to reach the target), all the while expending just 101.07 pJ of energy. This is a mere 4.61\% of the energy the MLP baseline requires.

In introducing noise, we consider the Acrobot environment's diverse six inputs. The initial four inputs range between -$\pi$ and $\pi$, and the last two lie within intervals of 4$\pi$ and 9$\pi$ respectively. We adopt a relative noise approach: $0.1*k$ noise for the first four inputs, $0.1*4*k$ for the fifth, and $0.1*9*k$ for the sixth, where 'k' spans values from 1 to 9.

Figures~\ref{robust_acrobot_0} to~\ref{robust_acrobot_3} highlight our quantized models with HDC's robustness, indicating an average 5.54\% boost across noise types, including Gaussian, Uniform, Poisson, and Clock jitter noise. Specifically, against Gaussian noise intensities of $0.5*$ and $0.7*,$ our 8-bit SNN + HDC model outperforms the MLP baseline, achieving average rewards of -133.55 and -162.68, which translate to 9\% and 10\% enhancements respectively.

The tables show that the 8-bit SNN + HDC model consistently shines at milder noise levels across all categories. However, under intense noise scenarios, the 32-bit SNN + HDC variant stands out, particularly against Gaussian, Poisson, and Clock jitter noises. Overall, these findings underscore our quantized models with HDC's advantageous performance over the traditional baseline.

\subsubsection{MountainCar}
The MountainCar markov decision process presents a deterministic challenge where a car, initially placed randomly at the bottom of a sinusoidal valley, aims to reach the right hill's peak. The car can accelerate in either direction. For every step taken, the agent receives a -1 reward until the peak is reached, incentivizing quicker solutions. The episode concludes once the car attains this goal.

Our evaluation aims to summit the peak within a 200-step constraint across 100 unique starting points, with results averaged for consistency. In this study, we utilize Network 3 (Net3), the most compact effective model identified.

Table~\ref{tab:network_mountaincar} reveals that our model, designed with 8-bit input, weight/bias for the initial and middle layers (SNN), and Hyperdimensional Computing (HDC) for the output, attains an outstanding reward of -131.43, implying an average of 131.43 steps to scale the mountain. This achievement is realized with an energy expenditure of just 35.7662 pJ, a mere 1.36\% of what the MLP baseline consumes, yet with enhanced efficacy.

\begin{figure*}[ht] 
\begin{minipage}[ht]{0.49\linewidth} 
\centering
\includegraphics[width = 0.8\linewidth]{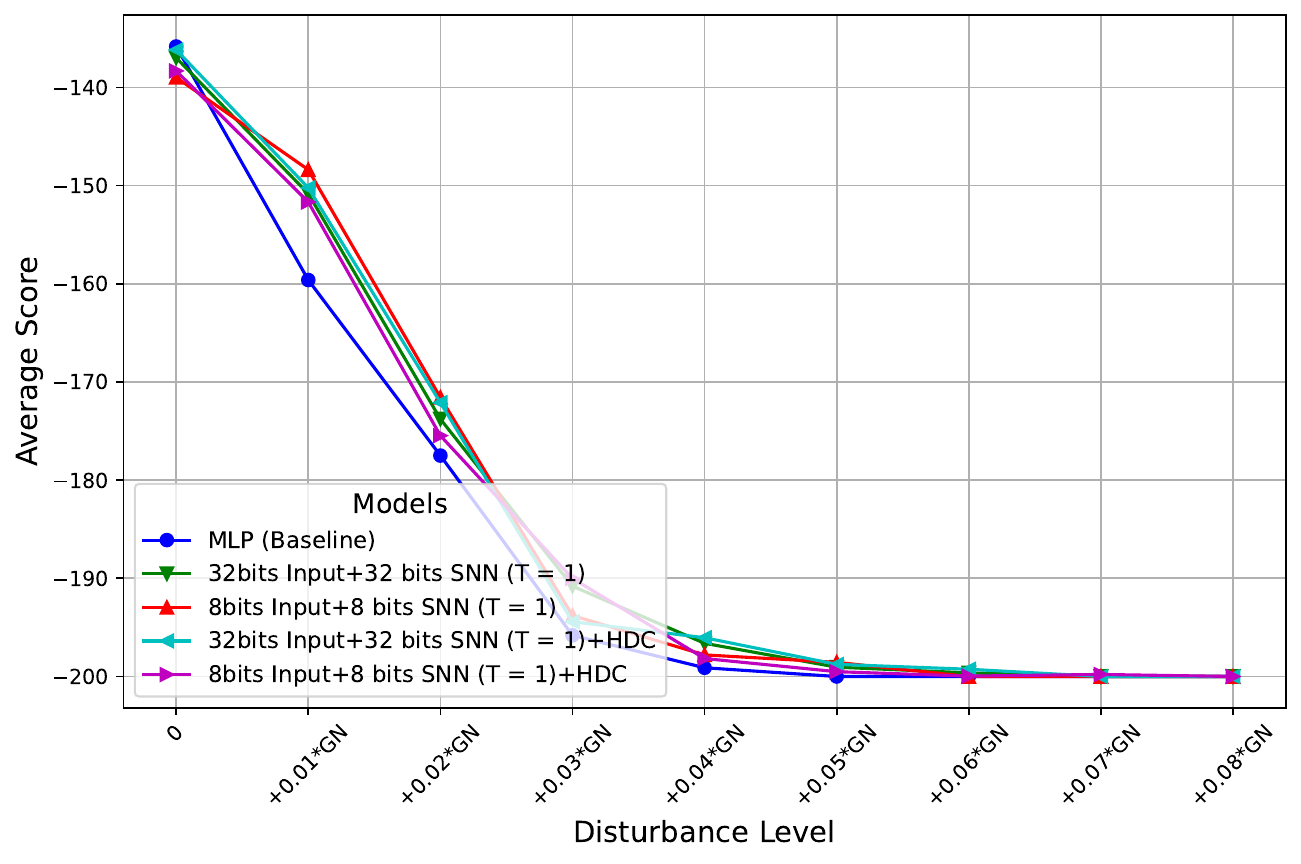}
\caption{Performance under Gaussian noise (MountainCar) } 
\label{robust_moutain_0} 
\end{minipage}
\hfill
\begin{minipage}[ht]{0.49\linewidth}
\centering
%ZL: will update the figure after we get the result
\includegraphics[width = 0.8\linewidth]{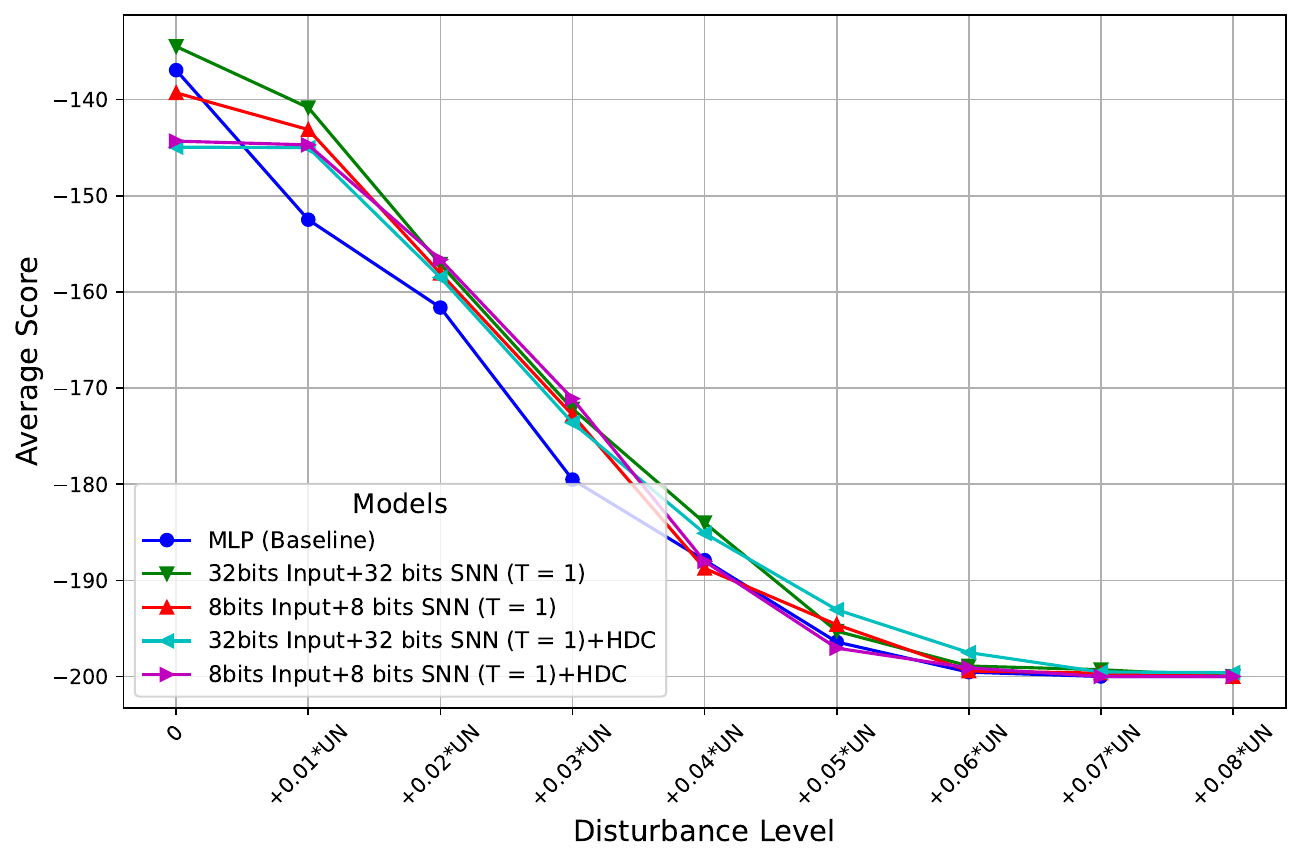}
\caption{Performance under Uniform noise (MountainCar) }
\label{robust_moutain_1}
\end{minipage}
\vspace{2ex}

\begin{minipage}[ht]{0.49\linewidth} 
\centering
\includegraphics[width = 0.8\linewidth]{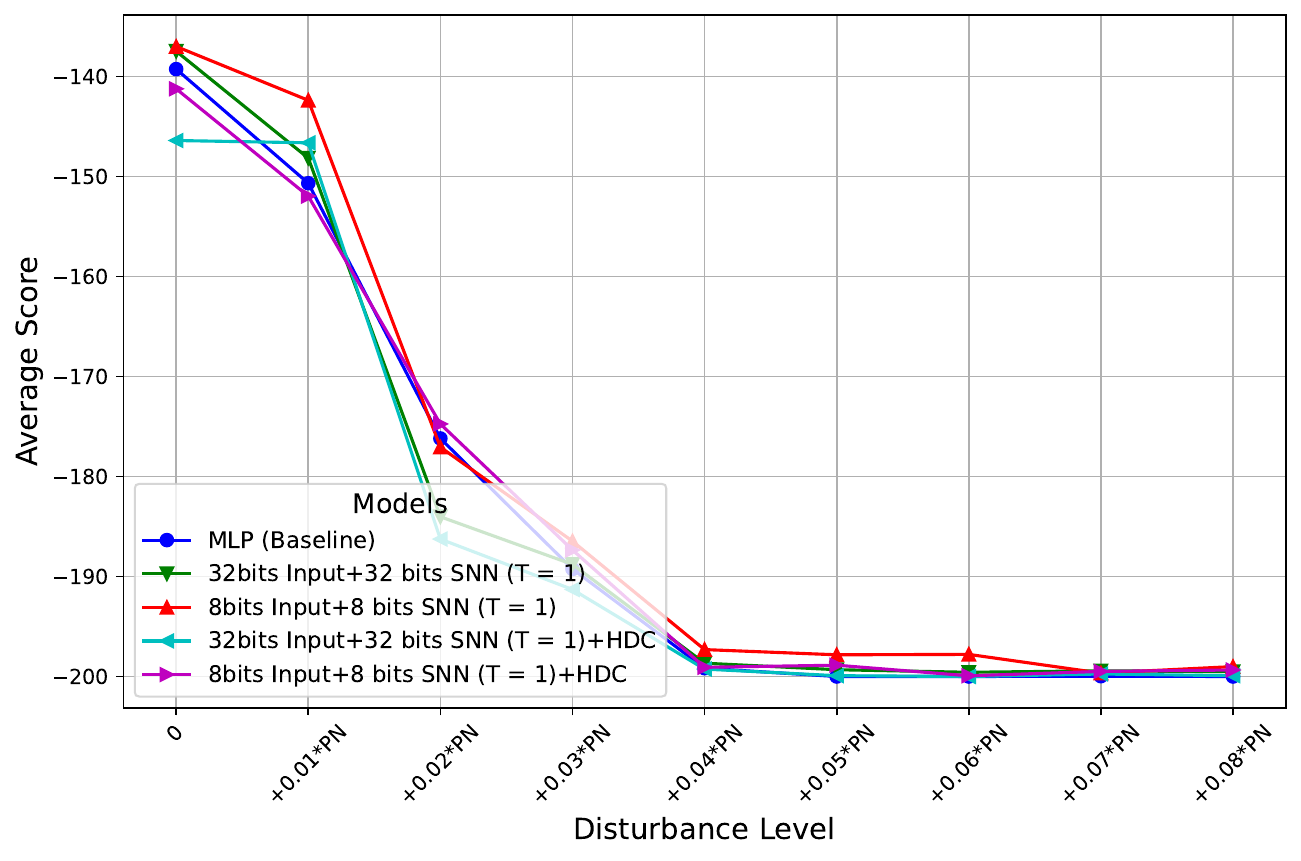}
\caption{Performance under Possion noise (MountainCar) } 
\label{robust_moutain_2} 
\end{minipage}
\hfill
\begin{minipage}[ht]{0.49\linewidth}
\centering
%ZL: will update the figure after we get the result
\includegraphics[width = 0.8\linewidth]{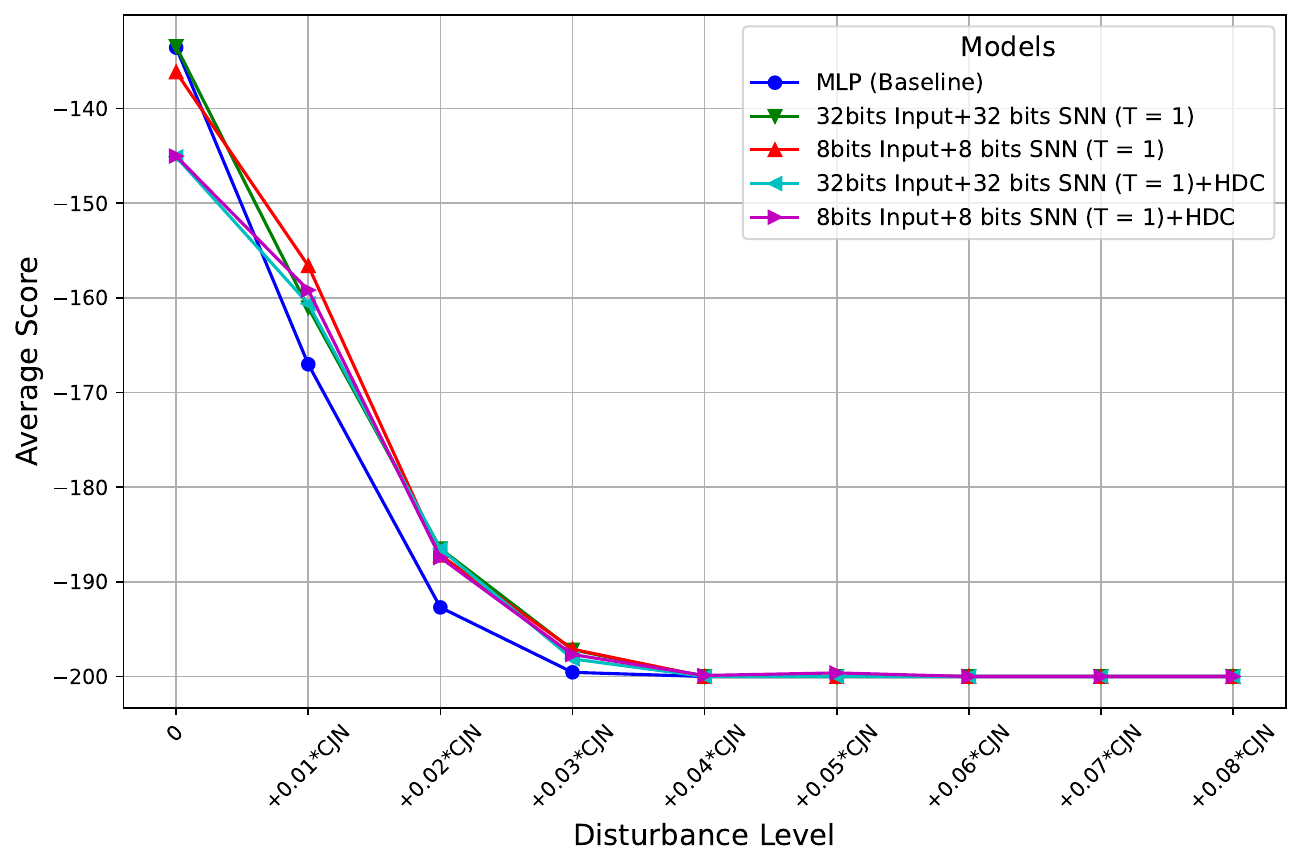}
\caption{Performance under Clock jitter noise (MountainCar) }
\label{robust_moutain_3}
\end{minipage}
\end{figure*}

As illustrated in Figures~\ref{robust_moutain_0} through ~\ref{robust_moutain_3}, our quantized models consistently outpace the MLP baseline in diverse noise scenarios. Under Gaussian Noise (GN), the MLP baseline's performance diminishes sharply, hitting the maximum penalty of -200.0 by +0.05GN. In contrast, the "32bits input+32bits SNN (T = 1)" and "32bits input+32bits SNN (T = 1)+HDC" models, among others, exhibit greater resilience as GN escalates. A similar pattern emerges with Uniform Noise (UN) where the MLP baseline lags behind. In the Poisson noise context, the MLP baseline's performance deteriorates quickly, while all SNN models sustain more modest performance declines up to +0.05PN. Regarding Clock jitter Noise (CJN), while the MLP baseline underperforms at milder noise levels, all models level to a uniform performance of -200.0 beyond +0.04CJN.

In summary, across various noise conditions, the MLP baseline consistently exhibits the most vulnerability. On the other hand, SNN models, particularly the "32bits input+32bits SNN (T = 1)" and "32bits input+32bits SNN (T = 1)+HDC", consistently demonstrate superior robustness. This underscores the potential of SNN models to deliver enhanced performance in noise-afflicted settings relative to traditional ML benchmarks.

\begin{table}[ht]
\small
\scalebox{0.9}{
\begin{threeparttable}
\begin{tabular}{c|ccc|cccc|cccc|c}
\hline
\multirow{2}{*}{MoutainCar}                                                                          & \multicolumn{3}{c|}{Input Layer}                                 & \multicolumn{4}{c|}{Intermediate Layer}                                                       & \multicolumn{4}{c|}{Output Layer}                                                            & \multirow{2}{*}{Rewards} \\ \cline{2-12}
                                                                                                     & \multicolumn{1}{c|}{Adds} & \multicolumn{1}{c|}{Mults} & Energy  & \multicolumn{1}{c|}{Adds} & \multicolumn{1}{c|}{Mults} & \multicolumn{1}{c|}{Bool} & Energy   & \multicolumn{1}{c|}{Adds} & \multicolumn{1}{c|}{Mults} & \multicolumn{1}{c|}{Bool} & Energy  &                          \\ \hline
\begin{tabular}[c]{@{}c@{}}MLP (baseline)\\  (FP32)\end{tabular}                             & \multicolumn{1}{c|}{24}    & \multicolumn{1}{c|}{48}    & 199.2pJ & \multicolumn{1}{c|}{24}    & \multicolumn{1}{c|}{576}   & \multicolumn{1}{c|}{0}    & 2152.8pJ & \multicolumn{1}{c|}{3}    & \multicolumn{1}{c|}{72}    & \multicolumn{1}{c|}{0}    & 269.1pJ & -135.59                  \\ \hline
\begin{tabular}[c]{@{}c@{}}32bits input+\\ 32bits SNN (T = 1)\\  (INT32)\end{tabular}      & \multicolumn{1}{c|}{24}    & \multicolumn{1}{c|}{48+2}    & 158.6pJ & \multicolumn{1}{c|}{600}  & \multicolumn{1}{c|}{0}     & \multicolumn{1}{c|}{0}    & 60pJ   & \multicolumn{1}{c|}{75}   & \multicolumn{1}{c|}{0}     & \multicolumn{1}{c|}{0}    & 7.5pJ   & -136.95                  \\ \hline
\begin{tabular}[c]{@{}c@{}}8bits input+\\ 8bits SNN (T = 1)\\  (INT8)\end{tabular}         & \multicolumn{1}{c|}{24}    & \multicolumn{1}{c|}{48+2}    & 17.72pJ   & \multicolumn{1}{c|}{600}  & \multicolumn{1}{c|}{0}     & \multicolumn{1}{c|}{0}    & 18pJ   & \multicolumn{1}{c|}{75}   & \multicolumn{1}{c|}{0}     & \multicolumn{1}{c|}{0}    & 2.25pJ   & -134.05                  \\ \hline
\begin{tabular}[c]{@{}c@{}}32bits input+\\ 32bits SNN (T = 1) +\\ HDC (INT32)\end{tabular} & \multicolumn{1}{c|}{24}    & \multicolumn{1}{c|}{48+2}    & 158.6pJ & \multicolumn{1}{c|}{600}  & \multicolumn{1}{c|}{0}     & \multicolumn{1}{c|}{0}   & 60pJ   & \multicolumn{1}{c|}{0}    & \multicolumn{1}{c|}{0}     & \multicolumn{1}{c|}{19}   & 0.0462pJ  & -143.1                   \\ \hline
\begin{tabular}[c]{@{}c@{}}8bits input+\\ 8bits SNN (T = 1)+\\ HDC (INT8)\end{tabular}     & \multicolumn{1}{c|}{24}    & \multicolumn{1}{c|}{48+2}    & 17.72pJ   & \multicolumn{1}{c|}{600}  & \multicolumn{1}{c|}{0}     & \multicolumn{1}{c|}{0}   & 18pJ   & \multicolumn{1}{c|}{0}    & \multicolumn{1}{c|}{0}     & \multicolumn{1}{c|}{19}   & 0.0462pJ  & -131.43                  \\ \hline
\end{tabular}

\caption{Comparison of Performance and energy consumption (MountainCar).}
\label{tab:network_mountaincar}
\begin{tablenotes}
\item {
Reward $-x$ indicates that the model needs an average step of $x$ to win the game. }
\end{tablenotes}

\end{threeparttable}}
\end{table}

\begin{figure*}[ht] 
\begin{minipage}[ht]{0.49\linewidth} 
\centering
\includegraphics[width = 0.8\linewidth]{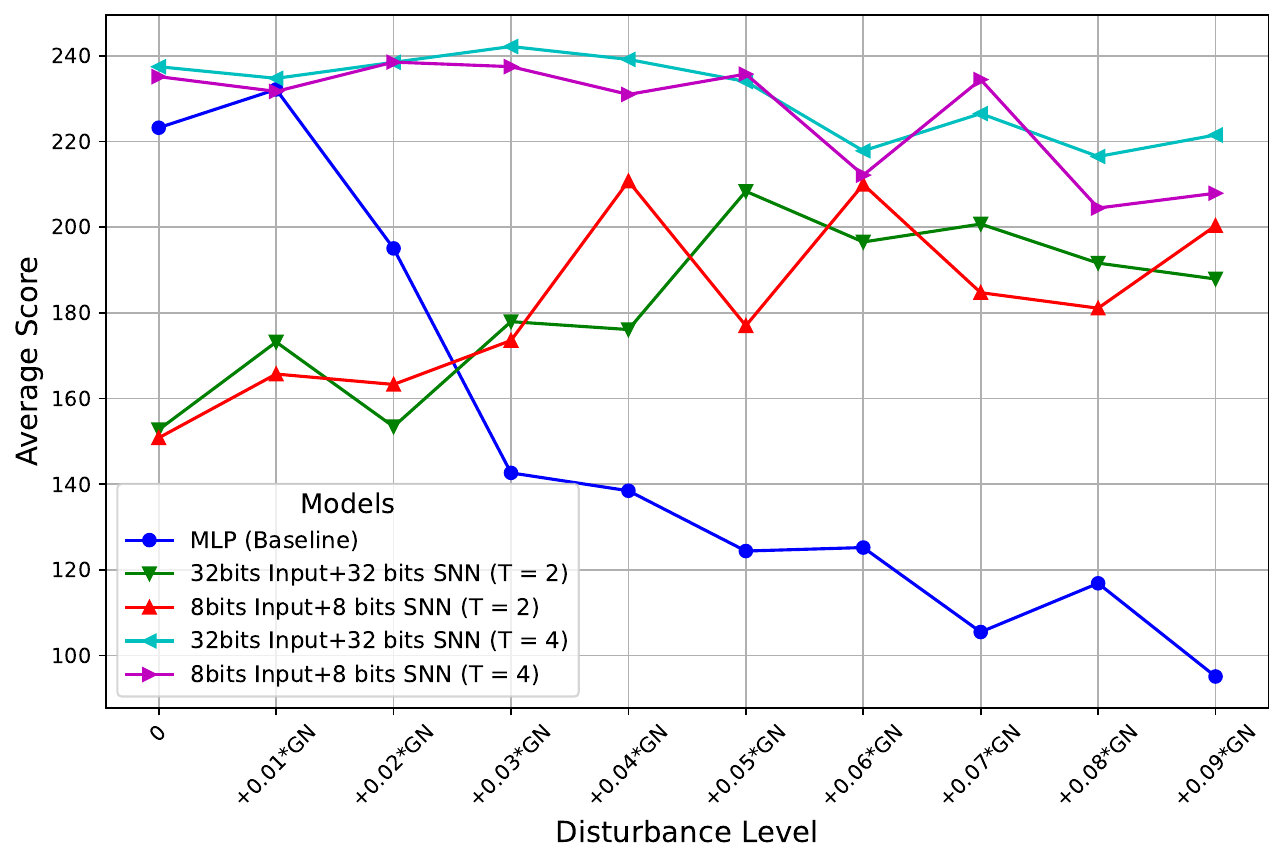}
\caption{Performance under Gaussian noise (Lunar Lander) } 
\label{robust_lunar_0} 
\end{minipage}
\begin{minipage}[ht]{0.49\linewidth}
\centering
%ZL: will update the figure after we get the result
\includegraphics[width = 0.8\linewidth]{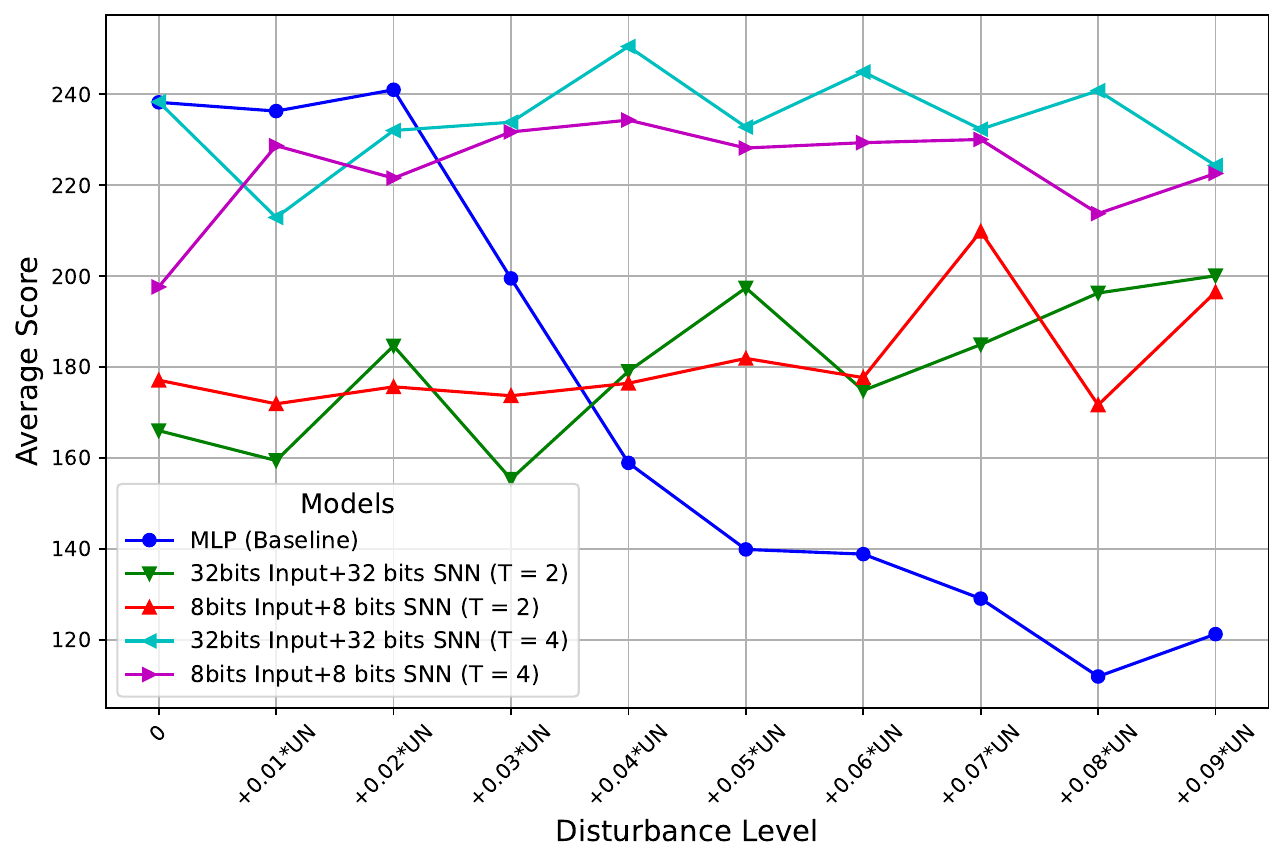}
\caption{Performance under Uniform noise (Lunar Lander) }
\label{robust_lunar_1}
\end{minipage}
\vspace{2ex}

\begin{minipage}[ht]{0.49\linewidth} 
\centering
\includegraphics[width = 0.8\linewidth]{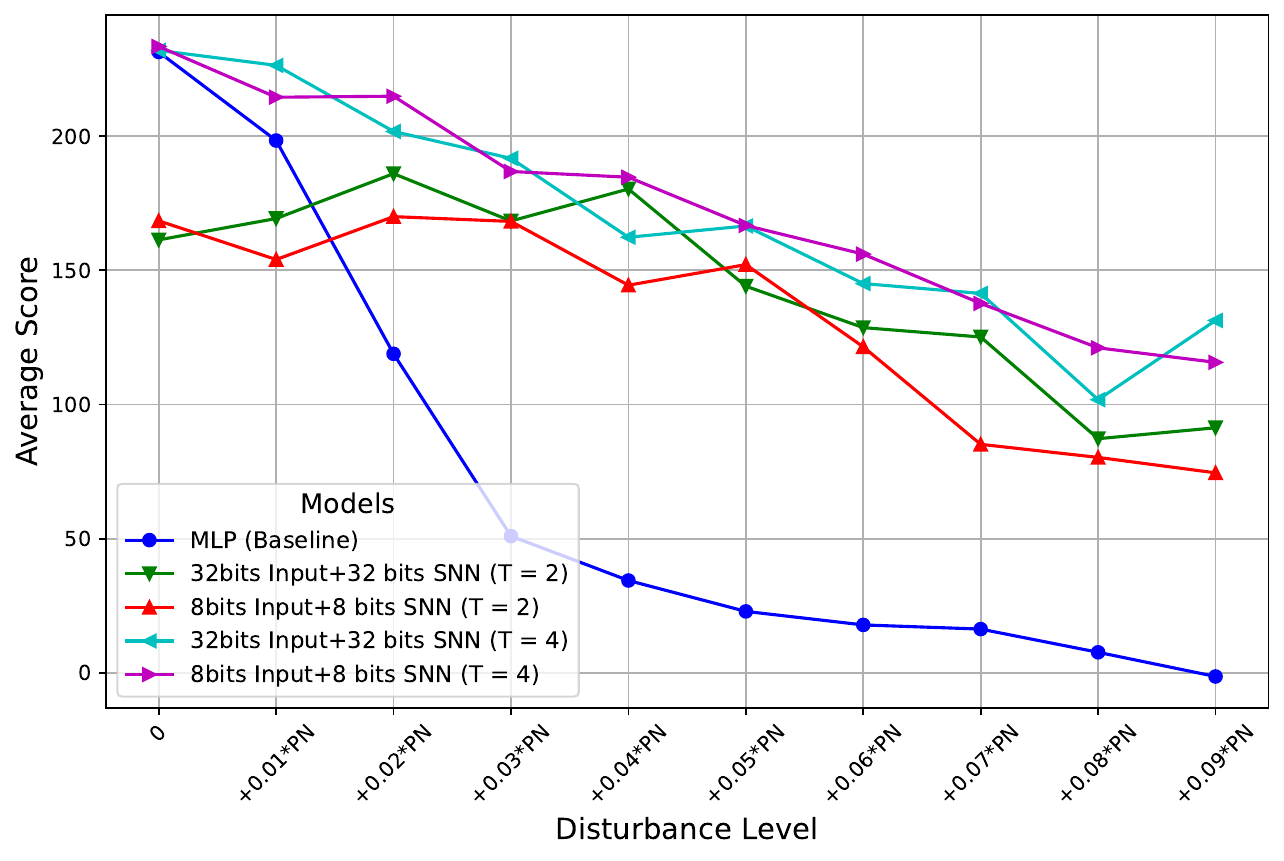}
\caption{Performance under Possion noise (Lunar Lander) } 
\label{robust_lunar_2} 
\end{minipage}
\begin{minipage}[ht]{0.49\linewidth}
\centering
%ZL: will update the figure after we get the result
\includegraphics[width = 0.8\linewidth]{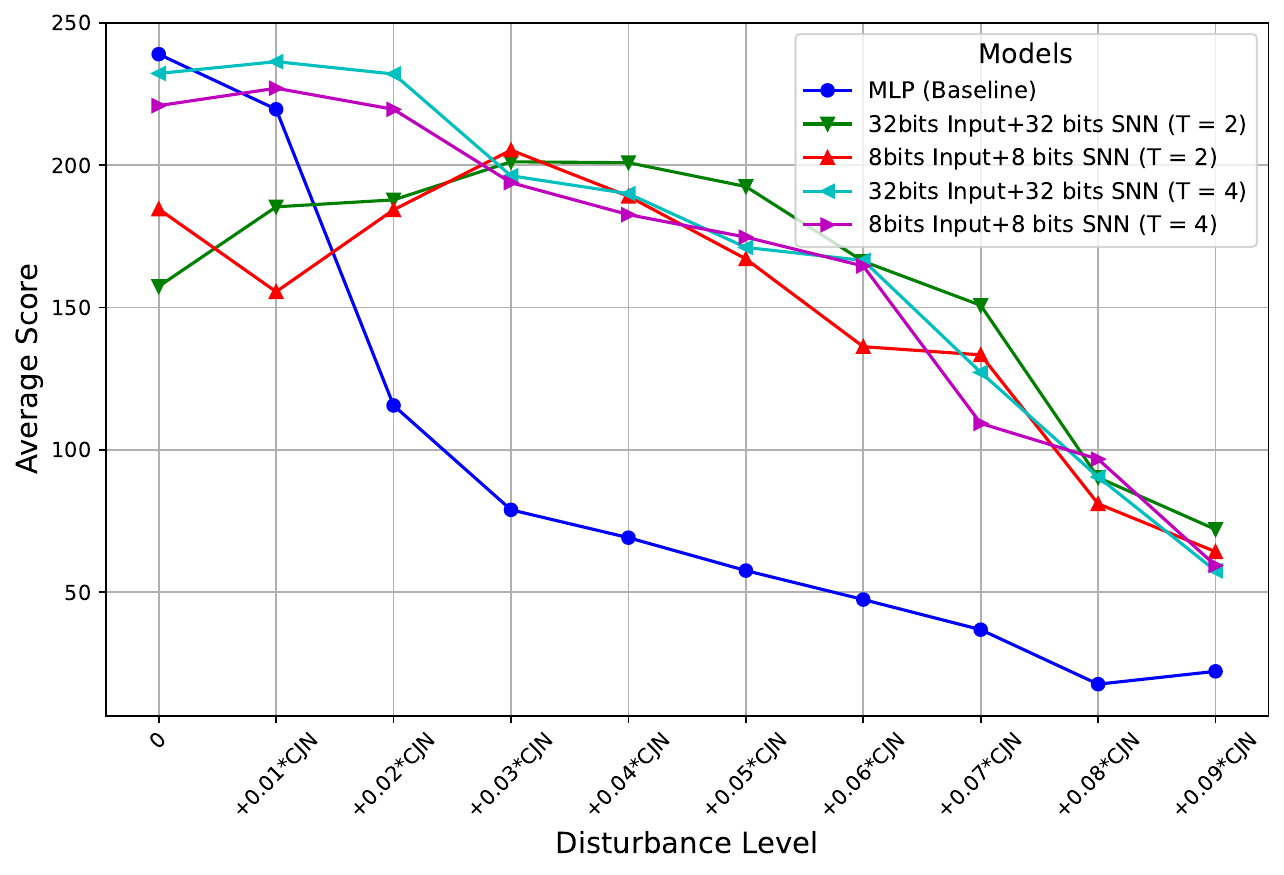}
\caption{Performance under Clock jitter noise (Lunar Lander) }
\label{robust_lunar_3}
\end{minipage}
\end{figure*}

\subsubsection{Lunar Lander}
In the LunarLander environment, an agent navigates a space lander in 2D to securely touch down between two flags, delineating the landing pad. The agent can move the lander left, right, remain stationary, or ignite the main engine. The reward structure is detailed as:

\begin{itemize}
    \item A smooth descent from the screen's top to the landing pad with stabilization earns between 100 and 140 points. Deviating from the pad leads to reward deductions.
    \item Crashing imposes a -100 point penalty, while a safe landing confers +100 points.
    \item Ground contact with each of the lander's legs grants +10 points.
    \item Using the main engine costs -0.3 points per frame, and activating the side engine reduces the reward by -0.03 points per frame.
\end{itemize}

The environment challenge is deemed resolved when the score reaches 200 points.

\begin{table}[ht]
\small
\scalebox{0.9}{
\begin{threeparttable}
\begin{tabular}{c|ccc|cccc|cccr|c}
\hline
\multirow{2}{*}{Lunar Lander}                                                                    & \multicolumn{3}{c|}{Input Layer}                                & \multicolumn{4}{c|}{Intermediate Layer}                                                      & \multicolumn{4}{c|}{Output Layer}                                                            & \multirow{2}{*}{Rewards} \\ \cline{2-12}
                                                                                                 & \multicolumn{1}{c|}{Adds} & \multicolumn{1}{c|}{Mults} & Energy & \multicolumn{1}{c|}{Adds}  & \multicolumn{1}{c|}{Mults} & \multicolumn{1}{c|}{Bool} & Energy & \multicolumn{1}{c|}{Adds} & \multicolumn{1}{c|}{Mults} & \multicolumn{1}{c|}{Bool} & Energy  &                          \\ \hline
\begin{tabular}[c]{@{}c@{}}MLP (baseline)\\  (FP32)\end{tabular}                         & \multicolumn{1}{c|}{64}    & \multicolumn{1}{c|}{512}   & 1.952J  & \multicolumn{1}{c|}{64}     & \multicolumn{1}{c|}{4096}  & \multicolumn{1}{c|}{0}    & 15.212J & \multicolumn{1}{c|}{4}    & \multicolumn{1}{c|}{256}   & \multicolumn{1}{c|}{0}    & 950.8pJ & 230.33                   \\ \hline
\begin{tabular}[c]{@{}c@{}}32bits input+\\ 32bits SNN (T = 2) \\ (INT32)\end{tabular}  & \multicolumn{1}{c|}{128}    & \multicolumn{1}{c|}{512+8}   & 1.629J  & \multicolumn{1}{c|}{8384}  & \multicolumn{1}{c|}{0}     & \multicolumn{1}{c|}{0}    & 0.8384J  & \multicolumn{1}{c|}{524}  & \multicolumn{1}{c|}{0}     & \multicolumn{1}{c|}{0}    & 52.4pJ  & 174.26                   \\ \hline
\begin{tabular}[c]{@{}c@{}}8bits input+\\ 8 bits SNN (T = 2) \\ (INT8)\end{tabular}     & \multicolumn{1}{c|}{128}    & \multicolumn{1}{c|}{512+8}   & 0.1358J  & \multicolumn{1}{c|}{8384}  & \multicolumn{1}{c|}{0}     & \multicolumn{1}{c|}{0}    & 0.256J  & \multicolumn{1}{c|}{524}  & \multicolumn{1}{c|}{0}     & \multicolumn{1}{c|}{0}    & 15.72pJ & 171.97                   \\ \hline
\begin{tabular}[c]{@{}c@{}}32bits input+\\ 32bits SNN (T = 4) \\  (INT32)\end{tabular} & \multicolumn{1}{c|}{256}    & \multicolumn{1}{c|}{512+8}   & 1.6424J  & \multicolumn{1}{c|}{16832} & \multicolumn{1}{c|}{0}     & \multicolumn{1}{c|}{0}    & 1.6832J  & \multicolumn{1}{c|}{1052} & \multicolumn{1}{c|}{0}     & \multicolumn{1}{c|}{0}    & 105.2pJ & 234.71                   \\ \hline
\begin{tabular}[c]{@{}c@{}}8bits input+\\ 8bits SNN (T = 4)\\  (INT8)\end{tabular}     & \multicolumn{1}{c|}{256}    & \multicolumn{1}{c|}{512+8}   & 0.1397J  & \multicolumn{1}{c|}{16832} & \multicolumn{1}{c|}{0}     & \multicolumn{1}{c|}{0}    & 0.5184J  & \multicolumn{1}{c|}{1052} & \multicolumn{1}{c|}{0}     & \multicolumn{1}{c|}{0}    & 31.56pJ & 227.58                   \\ \hline
\end{tabular}

\caption{Comparison of Performance and energy consumption (Lunar Lander).}
\label{tab:network_lunar}
\begin{tablenotes}
\item {
A reward of \(x\) denotes that the model accumulates a score of \(x\) before the game concludes. }
\end{tablenotes}

\end{threeparttable}
}
\end{table}

In our evaluation, each test is conducted 100 times with a cap of 1000 steps, ensuring ample chances for the lunar lander to achieve its goal. Given the environment's intricacy, the HDC and SNN models with a time step $T$ of 1 proved inadequate. To enhance the model's complexity and aptitude for discerning intricate details, we adjusted the SNN's time step to 2 and 4.

With a time step $T$ of 4 and input and weight/bias set to 8 bits, we secured a reward of 227.58. This aligns closely with the MLP baseline's 230.33, yet with a markedly lower energy usage of 0.6897 J—just 3.8\% of the baseline's 18.1148 J.

Figures~\ref{robust_lunar_0} to~\ref{robust_lunar_3} delineate the robustness contrast between the MLP baseline and our SNN configurations. Although the baseline initially shows tolerance to all noise types at zero level, it suffers a pronounced degradation with increasing noise. In contrast, specific SNN configurations, notably "32bits input+32bits SNN (T = 4)" and "8bits input+8bits SNN (T = 4)", consistently maintain robustness amidst rising noise. Remarkably, our "8bits input+8bits SNN (T = 4)" model surpasses the baseline by an average factor of 1.824 across all noise scenarios, highlighting the enhanced noise-resilient nature of SNNs compared to the MLP baseline.

\section{Related Work}

In environments constrained by computational resources, energy and memory efficiency are paramount. Various energy efficiency strategies are employed: \textbf{Early exit strategies}~\cite{bolukbasi2017adaptive,jayakodi2020design,panda2016conditional,scardapane2020should} utilize the depth of neural networks by adding multiple exit layers after convolutional layers. While it enhances adaptability, it may increase energy consumption. Notably, significant misclassifications can degrade performance versus the baseline~\cite{rashid2022template}. \textbf{Dynamic network pruning}~\cite{nikolaos2019dynamic,cai2019once,dong2019network,frankle2018lottery} iteratively trims larger networks targeting optimal performance metrics. It is tailored for networks with redundant weights/neurons. \textbf{Model compression}, incorporating techniques like binarization~\cite{courbariaux2016binarized} and quantization~\cite{hubara2021accurate,wang2019haq,fan2020training}, prioritizes model size reduction for memory efficiency. Quantization replaces FP32 operations with INT8 to reduce model size as well as conserve energy. %Our tests indicated minimal accuracy reduction when transitioning from FP32 to INT8 operations.

In contrast to prior methodologies hinging on intricate MLPs or CNNs, which necessitate energy-intensive multiplications during inference~\cite{de2018end,lockwood2020reinforcement,wang2020novel}, our proposed method is specifically tailored for scenarios that are either acutely resource-restricted or intensely energy-conscious. We introduce HyperSNN, a framework that leverages spiking neural networks (SNNs) to replace multiplications with additions and incorporates hyperdimensional computing (HDC) to substitute additions with xor operations. Importantly, an INT8 addition is characterized by consuming a mere 0.81\% of the energy required for FP32 multiplications~\cite{horowitz20141}. The resilience and efficiency of both SNN and HDC are well-attested by~\cite{el2021securing, kundu2021hire,sharmin2019comprehensive}.

\begin{table}[ht]
\small
\scalebox{0.75}{
\begin{threeparttable}
\begin{tabular}{c|cc|cc|cc|cc}
\hline
                                        & \multicolumn{2}{c|}{Cartpole}                                                                                                                                                          & \multicolumn{2}{c|}{Acrobot}                                                                                                                                                  & \multicolumn{2}{c|}{MountainCar}                                                                                                                                      & \multicolumn{2}{c}{Lunar Lander}                                                                                                                                        \\ \hline
                                                        & \multicolumn{1}{c|}{Model structure}                                                                                 & \begin{tabular}[c]{@{}c@{}}Total Model \\ Size(KB)\end{tabular} & \multicolumn{1}{c|}{Model structure}                                                                    & \begin{tabular}[c]{@{}c@{}}Total Model \\ Size (KB)\end{tabular} & \multicolumn{1}{c|}{Model structure}                                                                    & \begin{tabular}[c]{@{}c@{}}Total Model \\ Size (KB)\end{tabular} & \multicolumn{1}{c|}{Model structure}                                                                      & \begin{tabular}[c]{@{}c@{}}Total Model \\ Size (KB)\end{tabular} \\ \hline
Chowdhury~\cite{chowdhury2021one}                                                & \multicolumn{1}{c|}{\begin{tabular}[c]{@{}c@{}}SCN(16, 5, 2), SCN(32, 5, 2),\\ SCN(32, 5,2), Fc(32, 2)\end{tabular}} & 156.04                                                          & \multicolumn{1}{c|}{/}                                                                                  & /                                                                   & \multicolumn{1}{c|}{/}                                                                                  & /                                                           & \multicolumn{1}{c|}{/}                                                                                    & /                                                           \\ \hline
\begin{tabular}[c]{@{}c@{}}Chevtchenkoa  ~\cite{chevtchenko2020learning} \end{tabular} & \multicolumn{1}{c|}{\begin{tabular}[c]{@{}c@{}}Fc(4, 100), \\ Fc(100, 2)\end{tabular}}                               & 2.74                                                            & \multicolumn{1}{c|}{\begin{tabular}[c]{@{}c@{}}Fc(6, 120), \\ Fc(120, 20),\\ Fc(20, 3)\end{tabular}}    & 12.98                                                               & \multicolumn{1}{c|}{\begin{tabular}[c]{@{}c@{}}Fc(2, 100), \\  Fc(100, 3)\end{tabular}}  & 2.35                                                      & \multicolumn{1}{c|}{/}                                                                                    & /                                                           \\ \hline
Sergio  Chevtchenko ~\cite{chevtchenko2023neuromorphic}                                     & \multicolumn{1}{c|}{/}                                                                                               & /                                                               & \multicolumn{1}{c|}{\begin{tabular}[c]{@{}c@{}}Fc(6, 60), \\ Fc (60, 300),\\ Fc(300, 500),\\Fc(500, 3) \end{tabular}} & 666.89                                                              & \multicolumn{1}{c|}{/}                                                                                  & /                                                           & \multicolumn{1}{c|}{/}                                                                                    & /                                                           \\ \hline
Wu~\cite{wu2022training}                                        & \multicolumn{1}{c|}{\begin{tabular}[c]{@{}c@{}}Fc(4, 800), \\ Fc(800, 2)\end{tabular}}                               & 21.88                                                           & \multicolumn{1}{c|}{/}                                                                                  & /                                                                & \multicolumn{1}{c|}{\begin{tabular}[c]{@{}c@{}}Fc(40, 12), \\ Fc(12, 48),\\ Fc(48, 3)\end{tabular}}        & 4.93                                                        & \multicolumn{1}{c|}{/}                                                                                    & /                                                           \\ \hline
HyperSNN                                              & \multicolumn{1}{c|}{\begin{tabular}[c]{@{}c@{}}Net1: Fc(4,10), \\ HDC(10)\end{tabular}}                              & 0.051                                                           & \multicolumn{1}{c|}{\begin{tabular}[c]{@{}c@{}}Net2: Fc(6, 64), \\ HDC(64)\end{tabular}}                & 0.461                                                               & \multicolumn{1}{c|}{\begin{tabular}[c]{@{}c@{}}Net3: Fc(2, 24), \\ Fc(24, 24),\\  HDC(24)\end{tabular}} & 0.047                                                       & \multicolumn{1}{c|}{\begin{tabular}[c]{@{}c@{}}Net4: Fc(8, 64), \\ Fc(64, 64), \\ Fc(64, 4)\end{tabular}} & 4.88                                                        \\ \hline
\end{tabular}
\caption{Comparison of SNN network structures.}
\label{tab:network_com}

\begin{tablenotes}

\item {Fc($x$, $y$) indicates the input size of spiking fully connect layer is $x$ and the output size is $y$); SCN($x$,$y$,$z$) indicates the output channel is $x$, kernel size is $y$ and stride is $z$; HDC($x$) indicates the length of hypervector is $x$}
\end{tablenotes}
\end{threeparttable}}

\end{table}

Integrating SNN with HDC offers a pioneering approach to control problems, despite the separate examination of both methodologies. Common SNN techniques often rely on expansive network architectures to enhance accuracy, deploying algorithms such as DQN, DDQN, PPO, and the Nengo model. For instance, within the Cartpole domain, \cite{chowdhury2021one} employed a 3-layer CNN alongside an additional fully-connected layer via DQN, while \cite{wu2022training} achieved a reward exceeding 500 using a two-layer SNN with 800 hidden neurons through DDQN. Yet, our NET1, with a mere 10-neuron hidden layer, matches these reward levels. In the Acrobot environment, \cite{chevtchenko2020learning} employed dual 256-neuron hidden layers with PPO, while \cite{chevtchenko2023neuromorphic} implemented an SNN strategy using layers of 120 and 20 neurons via DQN, aiming for an average reward of -100. Such architectures are substantially larger compared to our designs. Our approach in the MountainCar scenario is more streamlined than those of \cite{wu2022training} and \cite{chevtchenko2023neuromorphic}, as outlined in Table\ref{tab:network_com}. In the Lunar Lander context, the model by \cite{peters2019dynamic}, though not exhaustively detailed, employs 2000 LIF neurons for encoding and an additional 500 for classification, which is significantly larger than our 64-neuron setup. Our strategy places a premium on energy efficiency, leading to models spanning 2-3 layers and hidden sizes ranging from 10 to 64.

While few studies have explored the appropriateness of HDC for control challenges, established implementations such as QHD use a 6000-dimension, Neumann Era starts at 1024, and DARL operates at 2048 dimensions~\cite{Ni2022QHDAB,amrouch2023beyond,chen2022darl}. Optimizing these dimensions can enhance power efficiency, especially for resource-constrained devices. In our work, using the cartpole example, we streamlined the dimension to just 10, leading to a substantial reduction in energy consumption for the final classification layer, from 75pJ to 2.43fJ.

\section{Discussion}
\subsection{Model predictive control}

Model predictive control (MPC) stands apart from conventional feedback-only control methods by optimizing performance criteria over a fixed horizon~\cite{howard2010receding}. At each time increment, MPC addresses an optimization challenge based on the latest data or estimates. The primary benefits of MPC include:
\begin{itemize}
\item \textbf{Predictive nature:} It adapts to system alterations through forecasting future states.
\item \textbf{Complex system handling:} Efficiently governs multi-input, multi-output systems.
\item \textbf{Constraint management:} Ensures systems operate within defined boundaries in real-time.
\item \textbf{Real-time adjustment:} Responds to disturbances via perpetual optimization.
\end{itemize}

Nevertheless, the predominant limitation of MPC is its significant energy consumption, which can hinder widespread industrial use. Overcoming this constraint might fully actualize its capabilities in computational settings.

\begin{algorithm}[ht]
    \centering
    \caption{MPC for cartpole environment:}
    \label{alg_mpc}
    \begin{algorithmic}[1]
\REQUIRE Pretrained model $M$; State at timestep t $S_t$; Action at time step $t$ $A_t$; State Simulation by MPC $\hat{S}$; Action Simulation by MPC $\hat{A}$
\ENSURE MPC Steps $l_{mpc}$; Max time step for Cartpole $T$; Gaussion Noise $G_t$ added to each time step $t$. 

\FOR{$i=1$ to $T$}
\STATE $A_{t+1} = M(S_t+G_t)$
\STATE $\hat{A}$ = $A_{t+1}$
\FOR{$j=1$ to $l_{mpc}$}
\STATE $\hat{S}$, done = $E(\hat{A},s_{t})$
\STATE $\hat{A}$ = $M(\hat{s})$
\IF {done == True}
\STATE $A_{t+1} = 1-A_{t+1}$
\ENDIF
\ENDFOR
\ENDFOR
\end{algorithmic}
\end{algorithm}

In this work, we showcase the superior energy efficiency of HyperSNN, specifically the "8bits input + 8bits SNN + HDC" configuration. It consumes only 1.36\% to 9.96\% of the energy demanded by standard MLP approaches. Such efficiency renders the execution of MPC workflows viable, setting the stage for advanced control algorithms.

When exposed to 0.08Gaussian Noise, both HyperSNN and the conventional MLP baseline struggle in the absence of MPC. Yet, by extending MPC steps to seven — which signifies considering seven upcoming outcomes — our model effectively accomplishes the task, achieving an average reward of 780.94. Importantly, this success is achieved with only 0.8 times the energy consumption of the traditional MLP baseline, which doesn't utilize MPC. This distinction underscores the energy-saving benefits of our model, even when it is paired with sophisticated control mechanisms like MPC. The detailed results can be found in Table~\ref{tab:mpc}.

\begin{table}[ht]
\small
%\scalebox{1.2}{
\begin{threeparttable}
\begin{tabular}{c|c|c|c|c|c|c|c|c}
\hline
Cartpole                                                                                       & 1 more MPC step & 2      & 3      & 4      & 5      & 6      & 7      & 8       \\ \hline
\begin{tabular}[c]{@{}c@{}}Rewards $x$ of 8bits input+ \\ 8bits SNN+ HDC with 0.08 GN\end{tabular} & 182.85          & 190.47 & 209.34 & 225.68 & 254.78 & 471.66 & 780.94 & 1282.14 \\ \hline
\end{tabular}
\caption{Ablation Study on MPC steps with 0.08GN on 8-bits model.}
\label{tab:mpc}
\begin{tablenotes}
\item {
A reward of \(x\) signifies that the cartpole maintains balance for \(x\) steps before failure.}
\end{tablenotes}

\end{threeparttable}
\end{table}

\subsection{Classification}

In evaluating the robustness of our HyperSNN models, we have also subjected them to diverse classification tasks emblematic of common wearable technology applications. This encompassed human activity recognition via the UCI-HAR dataset, speech analysis using the ISOLET dataset, and image classification leveraging the Fashion MNIST dataset. Our empirical findings reveal that by adopting 8-bit inputs alongside weights/biases, we achieve comparable performance to the 32-bit versions across all tested scenarios, provided the time window size, \( T \), is configured between 1 and 2.

Our models employ an architecture consisting of 8-bit inputs for the input layer, 8-bit weights and biases in the SNN intermediate layer, and leverage HDC for the output layer to address classification tasks. Upon evaluation against the FashionMNIST, ISOLET, and UCI-HAR datasets, our models posted accuracies of 85.49\%, 95.77\%, and 94.53\%, respectively. This performance comes, as expected, with a 69-74\% reduction in energy consumption. However, the overhead of other components (such as input normalization) of the image classification task is significantly larger, hence the energy savings are not as large compared to control tasks.

\section{Conclusion}

In this study, we explore the combined use of spiking neural networks (SNNs) and hyperdimensional computing (HDC) in a model we called HyperSNN. It is a solution to the increasing need for energy-efficient models, especially in edge computing. In particular, HyperSNN demonstrates energy efficiency, especially in applications such as wearables and smart home applications, where both computational resources and energy are constrained. The combination of using SNNs that replace costly FP32 multiplications with INT8 additions together with low dimensional HDC results in energy savings without trading off accuracy, as verified empirically by our comprehensive experiments. Furthermore, HyperSNN is also robust, showing resilience against a diverse array of noise disturbances, making it suitable for real-world applications, especially in challenging environment settings. %Empirical validations on benchmark datasets corroborate HyperSNN's capabilities, presenting control accuracies comparable to established algorithms but at substantially diminished energy costs. . 
We believe HyperSNN is a step towards more sustainable and efficient control systems, and the principles outlined here will inform future efforts in creating energy-efficient machine learning models for edge computing.

\bibliographystyle{ACM-Reference-Format}
\bibliography{sample-base}

\end{document}